\newtheorem{theorem}{Theorem}
\newtheorem{corollary}{Corollary}
\newtheorem{lemma}{Lemma}
\newtheorem{observation}{Observation}
\newtheorem{definition}{Definition}
\newenvironment{customlemma}[1]
  {\innercustomlemma}
  {\endinnercustomlemma}
\DeclareMathOperator{\trace}{trace}
\newcommand{\exc}[2]{{\mathbb E}\left[ #1 \,\middle \vert\, #2 \right]}
\newcommand{\exs}[2]{{\mathbb E_{#1}}\left[ #2 \right]}
\newcommand{\vars}[2]{{\mathbb V_{#1}}\left[ #2 \right]}
\newcommand{\excs}[3]{{\mathbb E_{#1}}\left[ #2  \,\middle \vert\, #3 \right]}
\algnewcommand{\LineComment}[1]{\State \(\triangleright\) #1}
\newcommand{\Sigmasqr}[0]{\Sigma^{1/2}}
\begin{document}

\title{Expected Policy Gradients}

\author{
  Kamil Ciosek \and Shimon Whiteson\\
  Department of Computer Science, University of Oxford\\
  Wolfson Building,
  Parks Road,
  Oxford OX1 3QD \\
  \texttt{\{kamil.ciosek,shimon.whiteson\}@cs.ox.ac.uk}
}

\maketitle

\begin{abstract}

We propose \emph{expected policy gradients} (EPG), which unify stochastic policy gradients (SPG) and deterministic policy gradients (DPG) for reinforcement learning. Inspired by \emph{expected sarsa}, EPG integrates across the action when estimating the gradient, instead of relying only on the action in the sampled trajectory. We establish a new \emph{general policy gradient theorem}, of which the stochastic and deterministic policy gradient theorems are special cases. We also prove that EPG reduces the variance of the gradient estimates without requiring deterministic policies and, for the Gaussian case, with no computational overhead. Finally, we show that it is optimal in a certain sense to explore with a Gaussian policy such that the covariance is proportional to $e^H$, where $H$ is the scaled Hessian of the critic with respect to the actions. We present empirical results confirming that this new form of exploration substantially outperforms DPG with the Ornstein-Uhlenbeck heuristic in four challenging MuJoCo domains.
\end{abstract}

\section{Introduction}
\emph{Policy gradient} methods \citep{sutton2000policy, peters2006policy, peters2008reinforcement, silver2014deterministic}, which optimise policies by gradient ascent, have enjoyed great success in reinforcement learning problems with large or continuous action spaces. The archetypal algorithm optimises an \emph{actor}, i.e., a policy, by  following a policy gradient that is estimated using a \emph{critic}, i.e., a value function.

The policy can be stochastic or deterministic, yielding \emph{stochastic policy gradients} (SPG) \citep{sutton2000policy} or \emph{deterministic policy gradients} (DPG) \citep{silver2014deterministic}. The theory underpinning these methods is quite fragmented, as each approach has a separate policy gradient theorem guaranteeing the policy gradient is unbiased under certain conditions.

Furthermore,  both approaches have significant shortcomings. For SPG,  variance in the gradient estimates means that many trajectories are usually needed for learning.  Since gathering trajectories is typically expensive, there is a great need for more sample efficient methods.

 DPG's use of deterministic policies mitigates the problem of variance in the gradient but raises other difficulties.  The theoretical support for DPG is limited since it assumes a critic that approximates $\nabla_a Q$ when in practice it approximates $Q$ instead.  In addition, DPG learns \emph{off-policy}\footnote{We show in this paper that, in certain settings, off-policy DPG is equivalent to EPG, our on-policy method.}, which is undesirable when we want learning  to take the cost of exploration into account. More importantly, learning off-policy necessitates designing a suitable \emph{exploration policy}, which is difficult in practice.  In fact, efficient exploration in DPG is an open problem and most applications simply use independent Gaussian noise or the Ornstein-Uhlenbeck heuristic \citep{uhlenbeck1930theory,lillicrap2015continuous}.

In this paper, we propose a new approach called \emph{expected policy gradients} (EPG) that unifies policy gradients in a way that yields both theoretical and practical insights. Inspired by \emph{expected sarsa} \citep{sutton1998reinforcement,vanseijen:adprl09}, the main idea is to integrate across the action selected by the stochastic policy when estimating the gradient, instead of relying only on the action selected during the sampled trajectory.

EPG enables two theoretical contributions. First, we establish a number of equivalences between EPG and DPG, among which is a new \emph{general policy gradient theorem}, of which the stochastic and deterministic policy gradient theorems are special cases. Second, we prove that EPG reduces the variance of the gradient estimates without requiring deterministic policies and, for the Gaussian case, with no computational overhead over SPG.

EPG also enables a practical contribution: a principled exploration strategy for continuous problems. We show that it is optimal in a certain sense to explore with a Gaussian policy such that the covariance is proportional to $e^H$, where $H$ is the scaled Hessian of the critic with respect to the actions. We present empirical results confirming that this new approach to exploration substantially outperforms DPG with Ornstein-Uhlenbeck exploration in four challenging MuJoCo domains.

\section{Background}
\label{sec:bg}
A \emph{Markov decision process} is a tuple $(S,A,R,p,p_0,\gamma)$ where $S$ is a set of states, $A$ is a set of actions (in practice either $A=\mathbb{R}^d$ or $A$ is finite), $R(s,a)$ is a reward function, $p(s' \mid a,s)$ is a transition kernel, $p_0$ is an initial state distribution, and $\gamma \in [0,1)$ is a discount factor. A policy $\pi(a \mid s)$ is a distribution over actions given a state. We denote trajectories as $\tau^\pi = (s_0,a_0,r_0,s_1,a_1,r_1,\dots)$, where $s_0 \sim p_0$, $a_t \sim \pi(\cdot \mid s_{t-1})$ and $r_t$ is a sample reward. A policy $\pi$ induces a Markov process with transition kernel $p_\pi(s' \mid s) = \int_a d \pi (a \mid s) p(s' \mid a, s)$ where we use the symbol $d \pi (a \mid s)$ to denote Lebesgue integration against the measure $\pi (a \mid s)$ where $s$ is fixed. We assume the induced Markov process is ergodic with a single invariant measure defined for the whole state space.  The value function is $V^\pi = \exs{\tau}{\sum_i \gamma_i r_i}$ where actions are sampled from $\pi$. The $Q$-function is $Q^\pi(a \mid s) = \excs{R}{r}{s,a} + \gamma \excs{p(s\ \mid s)}{V^\pi(s')}{s}$ and the advantage function is $A^\pi(a \mid s) = Q^\pi(a \mid s) - V^\pi(s)$. An optimal policy maximises the total return $J = \int_s dp_0(s) V^\pi(s)$. Since we consider only on-policy learning with just one current policy, we drop the $\pi$ super/subscript where it is redundant.

If $\pi$ is parameterised by $\theta$, then \emph{stochastic policy gradients} (SPG) \citep{sutton2000policy, peters2006policy, peters2008reinforcement} perform gradient ascent on $\nabla J$, the gradient of $J$ with respect to $\theta$ (gradients without a subscript are always with respect to $\theta$). For stochastic policies, we have:
\begin{gather}
\label{spg-update} \textstyle
\nabla J = \int_s d \rho(s) \int_a d \pi(a \mid s) \nabla \log \pi(a \mid s) (Q(a,s) + b(s)),
\end{gather}
where $\rho$ is the discounted-ergodic occupancy measure, defined in the supplement, and $b(s)$ is a baseline, which can be any function that depends on the state but not the action, since $\int_a d \pi(a \mid s) \nabla \log \pi(a \mid s) b(s) = 0$. Typically, \eqref{spg-update} is approximated from samples from a trajectory $\tau$ of length $T$:
\begin{gather}
\label{spg-samples}\textstyle
\hat{\nabla} J = \sum_{t=0}^{T} \gamma^t \nabla \log \pi(a_t \mid s_t) (\hat{Q}(s_t, a_t) + b(s_t)).
\end{gather}
If the policy is deterministic (we denote it $\pi(s)$), we can use \emph{deterministic policy gradients} \citep{silver2014deterministic} instead:
\begin{gather}
    \label{dpg-update}\textstyle
    \nabla J = \int_s d \rho(s) \nabla \pi(s) \nabla_a Q (a = \pi(s), s).
\end{gather}
This update is then approximated using samples:
\begin{gather}
    \label{dpg-samples}\textstyle
    \hat{\nabla} J = \sum_{t=0}^{T} \gamma^t \nabla \pi(s) \nabla_a \hat{Q} (a = \pi(s_t), s_t).
\end{gather}
Since the policy is deterministic, the problem of exploration is addressed using an external source of noise, typically modeled using a zero-mean Ornstein-Uhlenbeck (OU) process \citep{uhlenbeck1930theory, lillicrap2015continuous} parametrized by $\psi$ and $\sigma$:
\begin{gather}
    \label{ou-noise}
    n_i \leftarrow - n_{i-1} \psi + \mathcal{N}(0,\sigma I) \quad a \sim \pi(s) + n_i.
\end{gather}
In \eqref{spg-samples} and \eqref{dpg-samples}, $\hat{Q}$ is a \emph{critic} that approximates $Q$ and can be learned by \emph{sarsa} \citep{rummery1994line, sutton1996generalization}:
\begin{align}
\label{sarsa}
\hat{Q}(s_t, a_t) \leftarrow &\hat{Q}(s_t, a_t) \; + \nonumber \\ \textstyle
&\; \textstyle \alpha \big[ r_{t+1} + \gamma \hat{Q}(s_{t+1}, a_{t+1}) - \hat{Q}(s_t, a_t) \big].
\end{align}

Alternatively, we can use \emph{expected sarsa} \citep{sutton1998reinforcement,vanseijen:adprl09}, which marginalises out $a_{t+1}$, the distribution over which is specified by the known policy, to reduce the variance in the update:
\begin{align}
\label{esarsa} \textstyle
\hat{Q}(&s_t, a_t) \leftarrow \hat{Q}(s_t, a_t) \; + \nonumber \\ \textstyle
 &\; \textstyle \; \alpha \big[ r_{t+1} + \gamma \int_a d \pi(a \mid s) \hat{Q}(s_{t+1}, a) - \hat{Q}(s_t, a_t) \big].
\end{align}
We could also use advantage learning \citep{baird1995residual} or LSTDQ \citep{lagoudakis2003least}. If the critic's function approximator is \emph{compatible}, then the actor, i.e., $\pi$, converges \citep{sutton2000policy}.

Instead of learning $\hat{Q}$, we can set $b(s) = - V(s)$ so that $Q(a,s) + b(s) = A(s,a)$ and then use the TD error $\delta(r,s',s) = r + \gamma V(s') - V(s)$ as an estimate of $A(s,a)$ \citep{bhatnagar2008incremental}:
\begin{gather}
\label{spg-samples-td} \textstyle
\hat{\nabla} J = \sum_{t=0}^{T} \gamma^t \nabla \log \pi(a_t \mid s_t) (r + \gamma \hat{V}(s') - \hat{V}(s)),
\end{gather}
where $\hat{V}(s)$ is an approximate value function learned using any policy evaluation algorithm. \eqref{spg-samples-td} works because $\exc{\delta(r,s',s)}{a,s} = A(s,a)$, i.e., the TD error is an unbiased estimate of the advantage function. The benefit of this approach is that it is sometimes easier to approximate $V$ than $Q$ and that the return in the TD error is unprojected, i.e., it is not distorted by function approximation. However, the TD error is noisy, introducing variance in the gradient.

To cope with this variance, we can reduce the learning rate when the variance of the gradient would otherwise explode, using, e.g., \emph{Adam} \citep{kingma2014adam}, \emph{natural policy gradients} \citep{kakade2002natural, amari1998natural, peters2008natural}, the adaptive step size method \citep{pirotta2013adaptive} or \emph{Newton's method} \citep{furmston2012unifying, parisi2016multi}. However, this results in slow learning when the variance is high. One can also use PGPE, which replaces the stochastic policy with a distribution over deterministic policies \citep{sehnke2010parameter}. However, PGPE precludes updating the current policy during the episode and makes it difficult to explore efficiently.

We can also eliminate all variance caused by the policy at the cost of making the policy deterministic and using the DPG update, which usually necessitates performing off-policy exploration. EPG, presented below, reduces to DPG in many useful cases, while providing a principled way to explore and also allowing for stochastic policies.

Yet another way to eliminate variance in the actor is not to have an actor at all, instead selecting actions soft-greedily with respect to $\hat{Q}$ learned using sarsa. This is trivial for discrete actions and can also be done with a one-step Newton's method for $Q$-functions that are quadric in the actions \citep{gu2016continuous}.

\section{Expected Policy Gradients}
In this section, we propose \emph{expected policy gradients} (EPG).

\subsection{Main Algorithm}
  First, we introduce $I^Q_\pi(s)$ to denote the inner integral in \eqref{spg-update}:
\begin{align}
\nabla J & = \int_s d\rho(s) \underbrace{\int_a d \pi(a \mid s) \nabla \log \pi(a \mid s) (Q(a, s) + b(s))}_{I^Q_\pi(s)} \nonumber \\ & = \int_s d \rho(s) I^Q_\pi(s).
\end{align}
This suggests a new way to write the approximate gradient:
\begin{gather}
\label{spg-samples-i}
\hat{\nabla} J = \sum_{t=0}^{T} \underbrace{\gamma^t \hat{I}^{\hat{Q}}_\pi(s_t)}_{g_t},
\end{gather}
where $\hat{I}^{\hat{Q}}_\pi(s)$ is some approximation to $I^{\hat{Q}}_\pi(s) = \int_a d \pi(a \mid s) \nabla \log \pi(a \mid s) (\hat{Q}(a, s) + b(s))$.  This approach makes explicit that one step in estimating the gradient is to evaluate an integral to estimate $I^{\hat{Q}}_\pi(s)$. The main insight behind EPG is that, given a state, $I^{\hat{Q}}_\pi(s)$ is expressed fully in terms of known quantities. Hence we can manipulate it analytically to obtain a formula or we can just compute the integral using any numerical quadrature if an analytical solution is impossible.

SPG as given in \eqref{spg-samples} performs this quadrature using a simple one-sample Monte Carlo method. However, relying on such a method is unnecessary. In fact, the actions used to interact with the environment need not be used at all in the evaluation of $\hat{I}_\pi^Q(s)$ since $a$ is a bound variable in the definition of $I_\pi^Q(s)$. The motivation is thus similar to that of expected sarsa but applied to the actor's gradient estimate instead of the critic's update rule. EPG, shown in Algorithm \ref{alg-epg}, uses \eqref{spg-samples-i} to form a policy gradient algorithm that repeatedly estimates $\hat{I}_\pi^Q(s)$ with an integration subroutine.

\begin{algorithm}[ht]
\begin{algorithmic}[1]
%\Function{EPG}{ }
 \State $s \gets s_0$, $t \gets 0$
 \State initialise optimiser, initialise policy $\pi$ parametrised by $\theta$
\While{not converged}
 \label{ln-a-i} \State $g_t \gets \gamma^t$ \textsc{do-integral}($\hat{Q}, s, \pi_\theta $) \\ \Comment{$g_t$ is the estimated policy gradient as per \eqref{spg-samples-i}}
 \State $\theta \gets  \theta \; + \;  $optimiser.\textsc{update}$(g_t) $
 \State $a \sim \pi(\cdot, s)$
 \State $s',r \gets $ simulator.\textsc{perform-action}(a)
 \State $\hat{Q}$.\textsc{update}($s,a,r,s'$)
 \State $t \gets t + 1$
 \State $s \gets s'$
\EndWhile
%\EndFunction

\end{algorithmic}
\caption{Expected Policy Gradients} \label{alg-epg}
\end{algorithm}
EPG has benefits even when an analytical solution is not possible: if the action space is low dimensional, numerical quadrature is cheap; if it is high dimensional, it is still often worthwhile to balance the expense of simulating the system with the cost of quadrature. Actually, even in the extreme case of expensive quadrature but cheap simulation, the limited resources available for quadrature could still be better spent on EPG with smart quadrature than SPG with simple Monte Carlo. One of the motivations of DPG was precisely that the simple one-sample Monte-Carlo quadrature implicitly used by SPG often yields high variance gradient estimates, even with a good baseline. To see why, consider Figure \ref{fig-mc-p} (left). A simple Monte Carlo method evaluates the integral by sampling one or more times from $\pi(a \mid s)$ (blue) and evaluating $\nabla_\mu \log \pi(a \mid s) Q(a, s)$ (red) as a function of $a$. A baseline can decrease the variance by adding a multiple of $\nabla_\mu \log \pi(a \mid s)$ to the red curve, but the problem  remains that the red curve has high values where the blue curve is almost zero. Consequently, substantial variance persists, whatever the baseline, even with a simple linear $Q$-function, as shown in Figure \ref{fig-mc-p} (right).  DPG addressed this problem for deterministic policies but EPG extends it to stochastic ones.

\subsection{Relationship to Other Methods}

EPG has some similarities with VINE sampling \citep{schulman2015trust}, which uses an (intrinsically noisy) Monte Carlo quadrature with many samples.\footnote{VINE sampling also differs from EPG by performing independent rollouts of $Q$, requiring a simulator with reset.} However, the example in Figure \ref{fig-mc-p} shows that even with a computationally expensive many-sample Monte Carlo method, the problem of variance remains, regardless of the baseline.

EPG is also related to variance minimisation techniques that interpolate between two estimators, e.g., \cite[Eq. 7]{gu2016q} is similar to Corollary \ref{linear-gpg}. However, EPG uses a quadric (not linear) approximation to the critic, which is crucial for exploration. Furthermore, it completely eliminates variance in the inner integral, as opposed to just reducing it.

The idea behind EPG was also independently and concurrently developed as Mean Actor Critic \citep{2017arXiv170900503A}, though only for discrete actions and without a supporting theoretical analysis.

\begin{figure}
    \centering
        \begin{subfigure}{0.225\textwidth}
            \includegraphics[width=\textwidth]{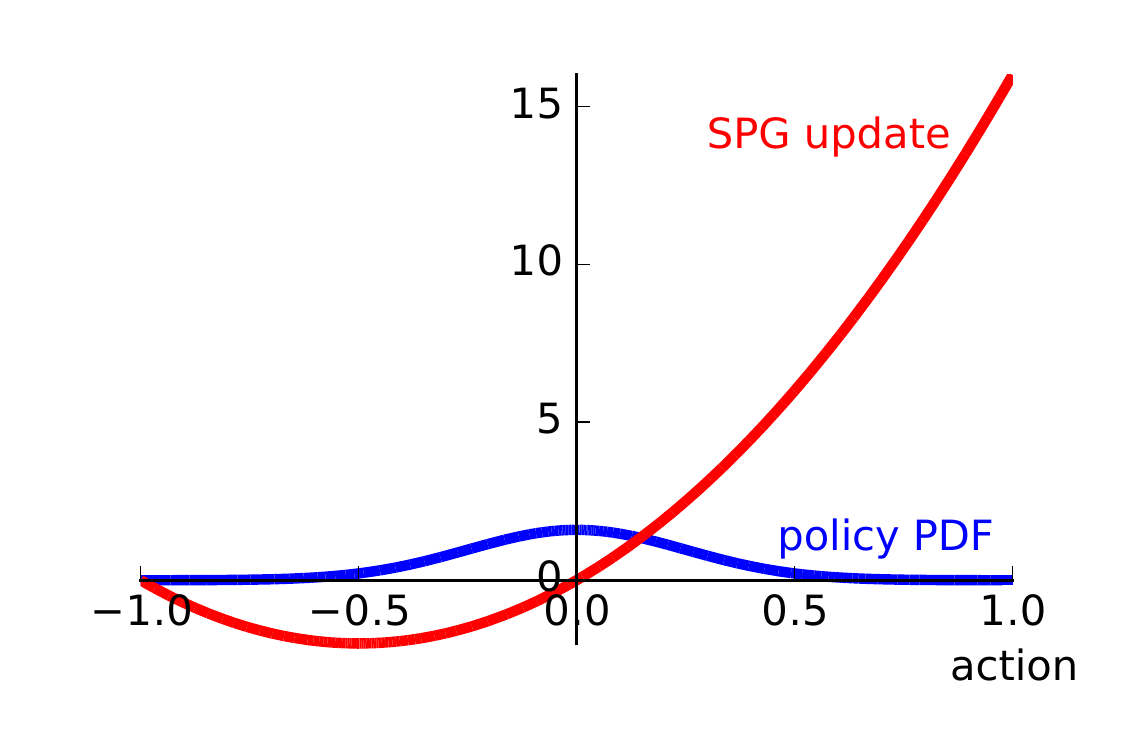}
        \end{subfigure}
        ~
        \begin{subfigure}{0.225\textwidth}
            \includegraphics[width=\textwidth]{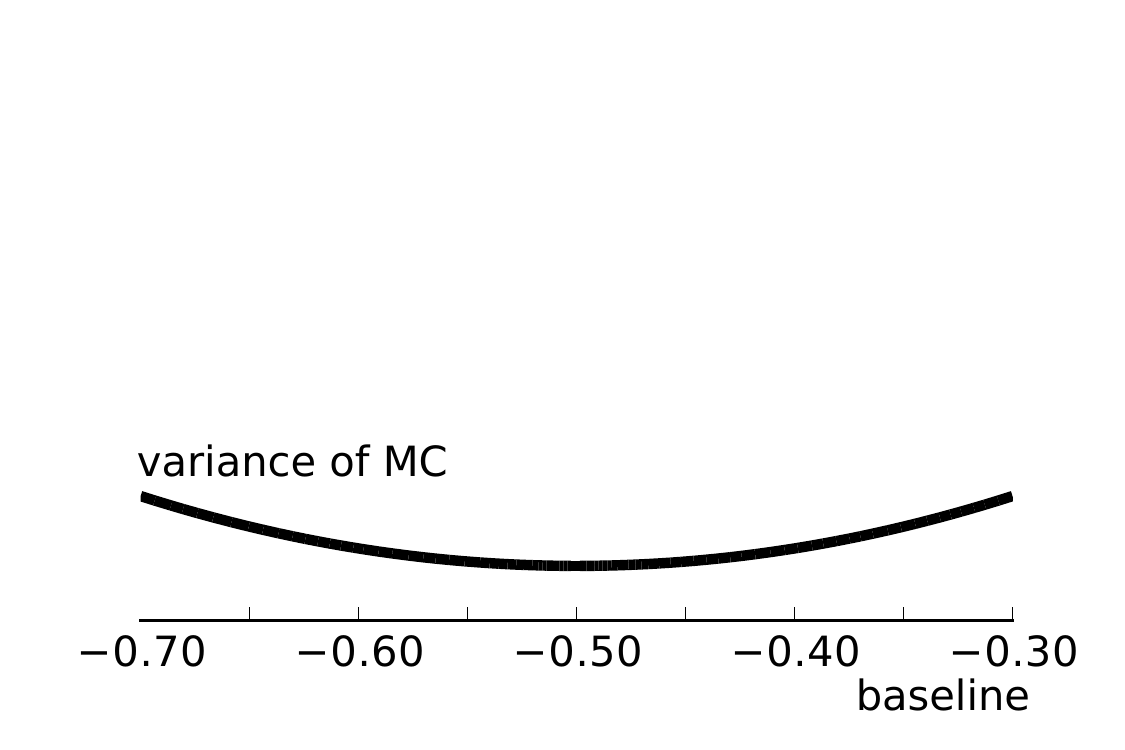}
        \end{subfigure}
        \caption{At left, $\pi(a \mid s)$ for a Gaussian policy with $\mu = \theta = 0$ at a given state and constant $\sigma^2$ (blue) and the SPG update $\nabla_\theta \log \pi(a \mid s) Q(a, s)$ (in red), obtained for $Q = \frac12 + \frac12 a$. At right, the variance of a simple single-sample Monte Carlo estimator as a function of the baseline.  In a simple multi-sample Monte Carlo method, the variance would go down as the number of samples.}
        \label{fig-mc-p}
        %\vspace{-5mm}
    \end{figure}

\subsection{Gaussian Policies}

EPG is particularly useful when we make the common assumption of a Gaussian policy: we can then perform the integration analytically under reasonable conditions. We show below (see Lemma \ref{lem-agpg}) that the update to the policy mean computed by EPG is equivalent to the DPG update. Moreover, a simple formula for the covariance can be derived (see Lemma \ref{lem-hexplore}). Algorithms \ref{alg-epg-fixpoint} and \ref{alg-p-quad} show the resulting special case of EPG, which we call \emph{Gaussian policy gradients} (GPG).

\begin{algorithm}[ht]
    \begin{algorithmic}[1]
     \State $s \gets s_0$, $t \gets 0$
 \State initialise optimiser
    \While{not converged}
     \State $g_t \gets \gamma^t$ \textsc{do-integral-Gauss}($\hat{Q}, s, \pi_\theta $)
     \State $\theta \gets  \theta \; + \;  $optimiser.\textsc{update}$(g_t) $ \\ \Comment{policy parameters $\theta$ are updated using gradient}
     \State $\Sigma_s \gets $  \textsc{get-covariance}($\hat{Q}, s, \pi_\theta $) \\ \Comment{$\Sigma_s$ computed from scratch}
     \State $a \sim \pi(\cdot \mid s)$ \Comment {$\pi(\cdot \mid s)= N(\mu_s, \Sigma_s)$}
     \State $s',r \gets $ simulator.\textsc{perform-action}(a)
     \State $\hat{Q}$.\textsc{update}($s,a,r,s'$)
     \State $t \gets t + 1$
     \State $s \gets s'$
    \EndWhile

    \end{algorithmic}
    \caption{Gaussian Policy Gradients} \label{alg-epg-fixpoint}
\end{algorithm}
%\vspace{-5mm}
\begin{algorithm}[ht]
    \begin{algorithmic}[1]
    \Function{do-integral-Gauss}{$\hat{Q}, s, \pi_\theta $}
    \State $I^Q_{\pi(s), \mu_s} \leftarrow (\nabla \mu_s) \nabla_a \hat{Q}(a = \mu_s,s)$ \Comment{Use Lemma \ref{lem-gpg} }
    \State \Return $I^Q_{\pi(s), \mu_s}$
    \EndFunction
    \\
    \Function{get-covariance}{$\hat{Q}, s, \pi_\theta $}
    \State $H \leftarrow$ \textsc{compute-Hessian}($\hat{Q}(\mu_s,s)$)
    \State \Return $\sigma_0^2 e^{cH}$ \Comment{Use Lemma \ref{lem-hexplore}}
    \EndFunction
\end{algorithmic}
\caption{Gaussian Integrals} \label{alg-p-quad}
\end{algorithm}

Surprisingly, GPG is on-policy but nonetheless fully equivalent to DPG, an off-policy method, with a particular form of exploration.  Hence, GPG, by specifying the policy's covariance, can be seen as a derivation of an exploration strategy for DPG. In this way, GPG addresses an important open question.  As we show later, this leads to improved performance in practice.

The computational cost of GPG is small: while it must store a Hessian matrix $H(a, s) = \nabla^2_a \hat{Q}(a,s)$, its size is only $d \times d$, where $A=\mathbb{R}^d$, which is typically small, e.g., $d=6$ for HalfCheetah-v1. This Hessian is the same size as the policy's covariance matrix, which any policy gradient must store anyway, and should not be confused with the Hessian with respect to the parameters of the neural network, as used with Newton's or natural gradient methods \citep{peters2008natural, furmston2016approximate}, which can easily have thousands of entries. Hence, GPG obtains EPG's variance reduction essentially for free.

\section{Analysis}
\label{sec:analysis}

In this section, we analyse EPG, showing that it unifies SPG and DPG, that $\hat{I}_\pi^Q(s)$ can often be computed analytically, and that EPG has lower variance than SPG.

\subsection{General Policy Gradient Theorem}
We begin by stating our most general result, showing that EPG can be seen as a generalisation of both SPG and DPG. To do this, we first state a new general policy gradient theorem. We use the shorthand $\nabla$ without a subscript to denote the gradient with respect to policy parameters $\theta$.
\begin{theorem}[General Policy Gradient Theorem]
If $\pi(\cdot,s)$ is a normalised Lebesgue measure for all $s$, then
\[
\nabla J = \int_s d\rho(s) \underbrace{ \left[ \nabla V(s) - \int_a d \pi(a,s) \nabla Q(a,s) \right]}_{I_G(s)}.
\]
\label{th-gpgt}
\end{theorem}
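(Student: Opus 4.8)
The plan is to start from the stochastic policy gradient formula already established in \eqref{spg-update}, which writes $\nabla J = \int_s d\rho(s)\, I^Q_\pi(s)$ with inner integral $I^Q_\pi(s) = \int_a d\pi(a\mid s)\,\nabla\log\pi(a\mid s)\,(Q(a,s)+b(s))$, and then to show that this inner integral is identically equal to $I_G(s) = \nabla V(s) - \int_a d\pi(a,s)\,\nabla Q(a,s)$. Since integrating $I^Q_\pi(s)$ against $\rho$ yields $\nabla J$ by assumption, establishing $I^Q_\pi(s) = I_G(s)$ pointwise in $s$ immediately gives the theorem. Because the baseline contributes $b(s)\,\nabla\int_a d\pi(a\mid s) = b(s)\,\nabla 1 = 0$ under the normalisation hypothesis, I may set $b(s)=0$ throughout without loss of generality.

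The core computation differentiates the identity $V(s) = \int_a d\pi(a\mid s)\,Q(a,s)$, which holds because the state value is the policy-averaged action value. First I would interchange $\nabla$ and $\int_a$ and apply the product rule, noting that both the measure $\pi(a\mid s)$ and the action value $Q(a,s)$ depend on $\theta$, to obtain $\nabla V(s) = \int_a [\nabla\pi(a\mid s)]\,Q(a,s)\,da + \int_a \pi(a\mid s)\,\nabla Q(a,s)\,da$. Next I would rewrite the first term using the log-derivative identity $\nabla\pi(a\mid s) = \pi(a\mid s)\,\nabla\log\pi(a\mid s)$, which is legitimate precisely because $\pi(\cdot,s)$ is a normalised Lebesgue measure and hence admits a density. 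The first term then becomes exactly $I^Q_\pi(s)$ (with $b(s)=0$) and the second is $\int_a d\pi(a,s)\,\nabla Q(a,s)$, so rearranging gives $I^Q_\pi(s) = \nabla V(s) - \int_a d\pi(a,s)\,\nabla Q(a,s) = I_G(s)$, as desired.

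The main obstacle is the technical justification of the interchange of differentiation and integration in the step $\nabla\int_a d\pi(a\mid s)\,Q(a,s) = \int_a \nabla[\pi(a\mid s)\,Q(a,s)]\,da$. This is where the hypothesis that $\pi(\cdot,s)$ is a normalised Lebesgue measure does its real work: absolute continuity with respect to Lebesgue measure guarantees a differentiable density to which the log-derivative identity applies, and normalisation ($\int_a d\pi(a\mid s)=1$ for all $s$, whence $\nabla\int_a d\pi(a\mid s)=0$) both annihilates the baseline term and underpins the dominated-convergence argument needed to pass the gradient inside the integral. I would discharge this under the standard regularity conditions (continuity and boundedness of $\pi$, $Q$, and their $\theta$-derivatives, together with an integrable dominating function) that validate the Leibniz rule; the remaining manipulations are elementary.
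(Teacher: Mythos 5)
There is a genuine gap: your proof only establishes the theorem for policies that admit a differentiable density with respect to Lebesgue measure, whereas the theorem is meant to hold for arbitrary normalised measures $\pi(\cdot,s)$ --- crucially including Dirac-delta measures, since the Deterministic Policy Gradient Theorem is later derived as a corollary of this result. Both of your key steps break down outside the density case: you take the stochastic policy gradient formula \eqref{spg-update} as your starting point (which already presupposes a differentiable density, via the $\nabla\log\pi$ term), and you invoke the log-derivative identity $\nabla\pi = \pi\,\nabla\log\pi$, which you yourself note requires a density. The entire point of writing $I_G(s) = \nabla V(s) - \int_a d\pi(a,s)\,\nabla Q(a,s)$ without any $\nabla\log\pi$ is precisely so that the statement makes sense for measures with no density. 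Your argument, run in the differentiable case, is essentially the paper's Corollary 1 (the SPG theorem) read backwards --- which also inverts the intended logical structure, since the paper derives SPG \emph{from} the general theorem rather than the other way around.

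The paper's own proof avoids densities entirely. It expands $\int_s d\rho(s)\int_a d\pi(a,s)\,\nabla Q(a,s)$ using the Bellman identity $Q(a,s) = R(a,s) + \gamma\int_{s'}dp(s'\mid s,a)\,V(s')$ (so $\nabla R = 0$ and only $\nabla V(s')$ survives), and then applies the generalised eigenfunction property of the discounted-ergodic measure $\rho$ (Lemma B in the supplement), namely $\gamma\int_s d\rho(s)\int_{s'}dp_\pi(s'\mid s)\,f(s') = \int_s d\rho(s)\,f(s) - \int_s dp_0(s)\,f(s)$ with $f = \nabla V$, recognising $\int_s dp_0(s)\,\nabla V(s) = \nabla J$ and rearranging. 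No product rule on $\pi$, and no differentiation of the measure itself, is ever needed. If you want to salvage your route, you would have to prove the identity $I^Q_\pi(s) = I_G(s)$ only as a corollary in the differentiable case and find a separate argument for general measures --- at which point you have reproduced the paper's structure. The interchange-of-limits issues you flag at the end are real but secondary to this loss of generality.
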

\begin{proof}
We begin by expanding the following expression.
\begin{align*}
 & \textstyle \int_s \textstyle d\rho(s) \int_a d\pi(a,s) \nabla Q(a,s) \\
&\scriptstyle = \int_s d\rho(s) \int_a d\pi(a,s) \nabla (R(a,s) + \gamma \int_{s'} d p(s' \mid s, a) V(s'))  \\
&\scriptstyle = \int_s d \rho(s) \int_a d \pi(a,s)  (\underbrace{\scriptstyle \nabla R(a,s)}_{\scriptstyle0} + \gamma \int_{s'} d p(s' \mid s, a) \nabla V(s')) \\
&\textstyle = \gamma \int_s d \rho(s) \int_{s'} d p_\pi(s' \mid s) \nabla V(s') \\
&\textstyle = \int_s d \rho(s) \nabla V(s) -  \underbrace{\textstyle \int_s d p_0(s) \nabla V(s)} _{\nabla J} \\
&\textstyle = \int_s d \rho(s) \nabla V(s) -  \nabla J.
\end{align*}
The first equality follows by expanding the definition of $Q$ and the penultimate one follows from Lemma B (in the supplement). Then the theorem follows by rearranging terms.
\end{proof}
The crucial benefit of Theorem \ref{th-gpgt} is that it works for all policies, both stochastic and deterministic, unifying previously separate derivations for the two settings. To show this, in the following two corollaries, we use Theorem \ref{th-gpgt} to recover the \emph{stochastic policy gradient theorem} \citep{sutton2000policy}  and the \emph{deterministic policy gradient theorem} \citep{silver2014deterministic}, in each case by introducing additional assumptions to obtain a formula for $I_G(s)$ expressible in terms of known quantities.
\begin{corollary}[Stochastic Policy Gradient Theorem]
\label{cor-spg}
If $\pi(\cdot \mid s)$ is differentiable, then
\begin{align*} \textstyle
\nabla J &= \textstyle \int_s d \rho(s) I_G(s) \\
&= \textstyle \int_s d \rho(s) \int_a d \pi(a \mid s) \nabla \log \pi(a \mid s) Q(a, s).
\end{align*}
\end{corollary}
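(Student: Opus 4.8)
The plan is to start from the expression for $I_G(s)$ supplied by Theorem \ref{th-gpgt}, namely $I_G(s) = \nabla V(s) - \int_a d\pi(a,s) \nabla Q(a,s)$, and to massage the first term $\nabla V(s)$ so that one of its pieces cancels the subtracted integral and the surviving piece is exactly the stochastic policy gradient integrand. The only extra ingredient I need beyond Theorem \ref{th-gpgt} is the defining relationship between $V$ and $Q$ under the policy, $V(s) = \int_a d\pi(a \mid s) Q(a,s)$, which holds because the value function is the policy-averaged $Q$-function.

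First I would differentiate this identity with respect to $\theta$. Writing the measure explicitly as $d\pi(a\mid s) = \pi(a\mid s)\,da$ and applying the product rule under the integral sign gives
\[
\nabla V(s) = \int_a \big[\nabla \pi(a\mid s)\big] Q(a,s)\,da + \int_a \pi(a\mid s)\, \nabla Q(a,s)\,da .
\]
The second term here is precisely $\int_a d\pi(a,s)\nabla Q(a,s)$, so substituting into $I_G(s)$ cancels it and leaves $I_G(s) = \int_a \big[\nabla \pi(a\mid s)\big] Q(a,s)\,da$. Next I would apply the log-derivative (likelihood-ratio) identity $\nabla \pi(a\mid s) = \pi(a\mid s)\,\nabla \log \pi(a\mid s)$, valid wherever $\pi(a\mid s) > 0$, to rewrite this as $\int_a \pi(a\mid s)\,\nabla\log\pi(a\mid s)\,Q(a,s)\,da = \int_a d\pi(a\mid s)\,\nabla\log\pi(a\mid s)\,Q(a,s)$. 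Reinserting this form of $I_G(s)$ into $\nabla J = \int_s d\rho(s) I_G(s)$ yields the claimed stochastic policy gradient theorem.

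I expect the main obstacle to be purely technical rather than conceptual: justifying the interchange of the gradient $\nabla$ and the integral $\int_a$ when differentiating $V(s) = \int_a d\pi(a\mid s)Q(a,s)$. This exchange requires the differentiability hypothesis on $\pi(\cdot\mid s)$ stated in the corollary together with a dominated-convergence-type regularity condition so that differentiation under the integral sign is legitimate; under the smoothness assumptions standard in this literature (and implicitly assumed throughout the paper) this is routine, so I would invoke it without belaboring the measure-theoretic details. A minor secondary point is that the log-derivative substitution is only valid on the support of $\pi$, but since the integrand $\nabla\pi(a\mid s)Q(a,s)$ vanishes wherever $\pi(a\mid s)=0$ for a differentiable density, this causes no difficulty.
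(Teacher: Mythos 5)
Your proposal is correct and follows essentially the same route as the paper's own proof: expand $\nabla V(s) = \nabla \int_a d\pi(a\mid s) Q(a,s)$ by the product rule, cancel the $\int_a d\pi(a,s)\nabla Q(a,s)$ term in $I_G(s)$, and apply the log-derivative identity to recover the SPG integrand. The only difference is that you make explicit the differentiation-under-the-integral and support caveats that the paper leaves implicit.
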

\begin{proof}
We obtain the following by expanding $\nabla V $.
\begin{align*}
\nabla V &= \nabla \textstyle \int_a d \pi(a,s) Q(a,s) = \\
& \textstyle \int_a da (\nabla \pi(a,s)) Q(a,s) +  \int_a d \pi(a,s) (\nabla Q(a,s))
\end{align*}
We obtain $I_G(s) = \int_a d \pi(a \mid s) \nabla \log \pi(a \mid s) Q(a, s) = I^Q_\pi(s)$ by plugging this into the definition of $I_G(s)$. We obtain $\nabla J$ by invoking  Theorem \ref{th-gpgt} and plugging in the above expression for $I_G(s)$.
\end{proof}

We now recover the DPG update introduced in \eqref{dpg-update}.
\begin{corollary}[Deterministic Policy Gradient Theorem]
If $\pi(\cdot \mid s)$ is a Dirac-delta measure (i.e., a deterministic policy) and $Q(\cdot,s)$ is differentiable, then
\[
\textstyle \nabla J = \int_s d \rho(s) I_G(s) =  \int_s d \rho(s) \nabla \pi(s) \nabla_a Q(a, s).
\]
\end{corollary}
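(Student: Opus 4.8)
The plan is to specialise the formula for $I_G(s)$ supplied by Theorem \ref{th-gpgt},
\[
I_G(s) = \nabla V(s) - \int_a d\pi(a,s) \nabla Q(a,s),
\]
to the case where $\pi(\cdot \mid s)$ is a Dirac-delta measure concentrated at the deterministic action $\pi(s)$. The one distinction I would keep scrupulously in mind throughout is that between the \emph{total} gradient with respect to $\theta$, written $\nabla$, and the \emph{partial} gradient of $Q$ in $\theta$ with the action argument held fixed, which is what the symbol $\nabla Q(a,s)$ denotes inside the integral; standing apart from both is $\nabla_a Q$, the gradient of $Q$ in its action argument.

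First I would dispose of the integral. Because $\pi(\cdot \mid s)$ is a Dirac-delta measure, integrating any function against it merely evaluates it at $a = \pi(s)$, so $\int_a d\pi(a,s)\nabla Q(a,s) = \nabla Q(a = \pi(s), s)$, the partial-in-$\theta$ gradient evaluated at the chosen action. Next I would expand $\nabla V(s)$. For a deterministic policy one has $V(s) = Q(a = \pi(s), s)$, and the true $Q$-function depends on $\theta$ in two ways: explicitly, through the policy followed at subsequent steps, and implicitly, through the action $\pi(s)$ at which it is evaluated. Applying the chain rule to this composition gives
\[
\nabla V(s) = \nabla \pi(s)\,\nabla_a Q(a = \pi(s), s) + \nabla Q(a = \pi(s), s),
\]
where $\nabla \pi(s)$ is the Jacobian of the action with respect to $\theta$, the first summand is the contribution routed through the action, and the second is exactly the partial-in-$\theta$ term already obtained from the integral.

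Finally I would substitute both expressions into $I_G(s)$. The term $\nabla Q(a = \pi(s), s)$ arising from the chain rule cancels precisely against the integral term, leaving $I_G(s) = \nabla \pi(s)\,\nabla_a Q(a = \pi(s), s)$. Plugging this into Theorem \ref{th-gpgt} yields the claimed formula for $\nabla J$ and recovers the deterministic policy gradient update of \eqref{dpg-update}.

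The main obstacle is notational rather than computational: the whole argument hinges on recognising that the $\nabla Q$ appearing inside the integral and the $\nabla Q$ produced by the chain rule are the \emph{same} object (the partial derivative with the action held fixed), since it is their exact cancellation that isolates the DPG term $\nabla\pi(s)\,\nabla_a Q$ from the spurious explicit-$\theta$ dependence of the critic. The only analytic point to check is that the chain-rule expansion of $V(s) = Q(\pi(s),s)$ is legitimate, which is precisely where the hypothesis that $Q(\cdot, s)$ is differentiable enters.
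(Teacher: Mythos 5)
Your proof is correct and follows essentially the same route as the paper: both specialise $I_G(s)$ from Theorem~\ref{th-gpgt} to a Dirac-delta policy and rest on the chain-rule expansion of $\nabla V(s) = \nabla\left[Q(a=\pi(s),s)\right]$. The only difference is presentational: you derive the identity $\nabla V(s) = \nabla \pi(s)\, \nabla_a Q(a=\pi(s),s) + \nabla Q(a=\pi(s),s)$ yourself and cancel the partial-in-$\theta$ term against the Dirac-integrated term, whereas the paper first rewrites $\int_a d\pi(a,s)\,\nabla Q(a,s)$ as $\gamma \int_{s'} d p_\pi(s'\mid s)\, \nabla V(s')$ and then cites the equivalent chain-rule identity from Silver et al.'s supplement rather than rederiving it, so your version is slightly more self-contained.
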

\begin{proof}
We begin by obtaining an expression for $I_G(s)$.
\begin{align*}
I_G(s) &= \textstyle \nabla V(s) - \int_a d \pi(a,s) \nabla Q(a,s) \\
&= \textstyle \nabla V(s) - \gamma \int_{s'} d p_\pi(s' \mid s) \nabla V(s') \\
&= \textstyle \nabla \pi(s) \nabla_a Q(a, s).
\end{align*}
Here, the second equality follows by expanding the definition of $Q$ and the third follows from an established deterministic policy gradient result \cite[Supplement, Eq.\ 1]{silver2014deterministic}.  We can then obtain $\nabla J$ by invoking Theorem \ref{th-gpgt} and plugging in the above expression for $I_G(s)$.
\end{proof}

These corollaries show that the choice between deterministic and stochastic policy gradients is fundamentally a choice of quadrature method. Hence, the empirical success of DPG relative to SPG \citep{silver2014deterministic, lillicrap2015continuous} can be understood in a new light. In particular, it can be attributed, not to a fundamental limitation of stochastic policies (indeed, stochastic policies are sometimes preferred), but instead to superior quadrature. DPG integrates over Dirac-delta measures, which is known to be easy, while SPG typically relies on simple Monte Carlo integration. Thanks to EPG, a deterministic approach is no longer required to obtain a method with low variance.

We add as a sideline that since Theorem \ref{th-gpgt} can be written as $I_G(s) = \nabla V(s) - \gamma \int_{s'} d p_\pi(s' \mid s) \nabla V(s')$, which involves the derivatives of value functions, GPG resembles \emph{value gradients} \citep{heess2015learning}. However, in our case, we are learning $\nabla J$ directly and do not perform recursive estimation of $\nabla V$ as value gradient methods do.

\subsection{Analytical Quadrature - Gaussian Policy}
We now derive a lemma supporting GPG.
\begin{lemma}[Gaussian Policy Gradients]
\label{lem-gpg}
If the policy is Gaussian, i.e. $\pi(\cdot \vert s) \sim \mathcal{N}(\mu_s,\Sigma_s)$ with $\mu_s$ and $\Sigmasqr_s$ parametrised by $\theta$, where $\Sigmasqr_s$ is symmetric and $\Sigmasqr_s \Sigmasqr_s = \Sigma_s$ and the critic is of the form $Q(a,s) = a^\top A(s) a + a^\top B(s) + \text{const}$ where $A(s)$ is symmetric for every $s$, then $I^Q_\pi(s) =  I^Q_{\pi(s), \mu_s} + I^Q_{\pi(s), \Sigmasqr_s}$, where the mean and covariance components are given by $ I^Q_{\pi(s), \mu_s} = (\nabla \mu_s) ( 2 A(s) \mu + B(s))$ and $I^Q_{\pi(s), \Sigmasqr_s} = (\nabla \Sigmasqr_s) 2 A(s) \Sigmasqr_s $.
\end{lemma}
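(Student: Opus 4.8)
The plan is to reduce the inner integral to an ordinary parameter-gradient of an expected value, evaluate that expectation in closed form using the quadratic structure of the critic together with the Gaussian moments, and then split the parameter-gradient by the chain rule into the contributions that flow through $\mu_s$ and through $\Sigmasqr_s$.

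First I would use the score-function (log-derivative) identity $\nabla\pi(a\mid s)=\pi(a\mid s)\nabla\log\pi(a\mid s)$, together with the facts that the critic $Q$ is held fixed (it does not depend on $\theta$) and that the baseline integrates to zero, to rewrite
\[
I^Q_\pi(s)=\int_a d\pi(a\mid s)\,\nabla\log\pi(a\mid s)\,Q(a,s)=\nabla\!\int_a d\pi(a\mid s)\,Q(a,s)=\nabla\,\ex{Q(a,s)},
\]
where the expectation is over $a\sim\mathcal{N}(\mu_s,\Sigma_s)$. This turns the problem into computing a single scalar expectation and differentiating it with respect to $\theta$.

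Next I would evaluate $\ex{Q(a,s)}$ using the Gaussian second-moment identity $\ex{aa^\top}=\mu_s\mu_s^\top+\Sigma_s$, which gives $\ex{a^\top A(s) a}=\mu_s^\top A(s)\mu_s+\trace(A(s)\Sigma_s)$ and hence $\ex{Q(a,s)}=\mu_s^\top A(s)\mu_s+\trace(A(s)\Sigma_s)+\mu_s^\top B(s)+\text{const}$. I would then apply the chain rule, separating the dependence on $\theta$ that flows through $\mu_s$ from that which flows through $L:=\Sigmasqr_s$. The mean part is immediate: since $A(s)$ is symmetric, $\nabla_\mu(\mu_s^\top A(s)\mu_s+\mu_s^\top B(s))=2A(s)\mu_s+B(s)$, so its contribution is $(\nabla\mu_s)(2A(s)\mu_s+B(s))=I^Q_{\pi(s),\mu_s}$. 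For the covariance part I would write $\Sigma_s=L^2$ (using that $L$ is symmetric), so that $\trace(A(s)\Sigma_s)=\trace(A(s)L^2)$, and compute the entrywise derivative $\partial\,\trace(A(s)L^2)/\partial L_{ij}=(A(s)L+LA(s))_{ij}$.

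The main obstacle, and the one subtle point, is reconciling this $A(s)L+LA(s)$ with the claimed clean form $2A(s)\Sigmasqr_s=2A(s)L$. The resolution is that $L$ is constrained to be symmetric, so its parameter-Jacobian $\nabla L$ (that is, $\partial L_{ij}/\partial\theta$) is symmetric in the matrix indices $i,j$; contracting this symmetric tensor against the antisymmetric matrix $LA(s)-A(s)L$ yields zero, so inside the contraction $A(s)L+LA(s)$ may be replaced by $2A(s)L$. This gives the covariance contribution $(\nabla\Sigmasqr_s)\,2A(s)\Sigmasqr_s=I^Q_{\pi(s),\Sigmasqr_s}$, and summing the two contributions yields $I^Q_\pi(s)=I^Q_{\pi(s),\mu_s}+I^Q_{\pi(s),\Sigmasqr_s}$. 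I would state explicitly that the symmetry of $\nabla L$ is exactly what licenses this simplification, since differentiating $\trace(A(s)L^2)$ naively as though the entries of $L$ were free would leave $A(s)L+LA(s)$ rather than $2A(s)L$.
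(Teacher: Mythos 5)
Your proposal follows essentially the same route as the paper's proof: move $\nabla$ outside the integral since the critic is fixed, evaluate $\exs{\pi}{Q(a,s)} = \trace(A(s)\Sigma_s) + \mu_s^\top A(s)\mu_s + B(s)^\top\mu_s$ via the Gaussian quadratic-form identity, and split by the chain rule through $\mu_s$ and $\Sigmasqr_s$. Your treatment of the covariance term is in fact more careful than the paper's, which simply asserts $\nabla_{\Sigmasqr}\trace(A(s)\Sigma_s) = 2A(s)\Sigmasqr_s$; your observation that the unconstrained derivative is $A(s)L + LA(s)$ and that the symmetry of $\nabla L$ kills the antisymmetric part $LA(s)-A(s)L$ under the contraction is exactly the justification needed to license that step.
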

See Lemma \ref{lem-gpg} in the supplement for proof of this result. While Lemma \ref{lem-gpg} requires the critic to be quadric in the actions, this assumption is not very restrictive since the coefficients $B(s)$ and $A(s)$ can be arbitrary continuous functions of the state, e.g., a neural network.

\subsection{Arbitrary Critics}
If $Q$ does not meet the conditions of Lemma \ref{lem-gpg}, we can  approximate $Q$ with a quadric function in the neighbourhood of the policy mean. This approximation is motivated by two arguments. First, in MDPs that model physical systems with reasonable reward functions, $Q$ is fairly smooth. Second, policy gradients are a local, incremental method anyway -- since the policy mean changes slowly, the values of $Q$ for actions far from the policy mean are usually not relevant for the current update.
\begin{corollary}[Approximate Gaussian Policy Gradients with an Arbitrary Critic]
\label{lem-agpg}
If the policy is Gaussian, i.e. $\pi(\cdot \vert s) \sim \mathcal{N}(\mu_s,\Sigma_s)$ with $\mu_s$ and $\Sigmasqr_s$ parametrised by $\theta$ as in Lemma \ref{lem-gpg} and any critic $Q(a,s)$ doubly differentiable with respect to actions for each state, then $ I^Q_{\pi(s), \mu_s} \approx (\nabla \mu_s) \nabla_a Q(a = \mu_s, s) $ and $I^Q_{\pi(s), \Sigmasqr_s} \approx (\nabla \Sigmasqr_s) H(\mu_s, s) \Sigmasqr_s $, where $H(\mu_s, s)$ is the Hessian of $Q$ with respect to $a$, evaluated at $\mu_s$ for a fixed $s$.
\end{corollary}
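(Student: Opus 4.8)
The plan is to obtain this as a direct consequence of Lemma \ref{lem-gpg} applied to the second-order Taylor approximation of the critic. First I would fix a state $s$ and expand $Q(a,s)$ to second order about the policy mean $\mu_s$,
\[
Q(a,s) \approx Q(\mu_s,s) + \nabla_a Q(\mu_s,s)^\top (a - \mu_s) + \tfrac12 (a - \mu_s)^\top H(\mu_s,s) (a - \mu_s),
\]
where $H(\mu_s,s) = \nabla_a^2 Q(\mu_s,s)$. The right-hand side is quadric in $a$, so the idea is to read off coefficients matching the form $Q(a,s) = a^\top A(s) a + a^\top B(s) + \text{const}$ required by Lemma \ref{lem-gpg}.

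Next I would collect terms. Expanding the quadratic piece and using the symmetry of the Hessian gives $A(s) = \tfrac12 H(\mu_s,s)$ and $B(s) = \nabla_a Q(\mu_s,s) - H(\mu_s,s)\mu_s$; note that $A(s)$ is automatically symmetric because $H(\mu_s,s)$ is, so the hypotheses of Lemma \ref{lem-gpg} are met by the approximating quadric. Substituting into the mean component from Lemma \ref{lem-gpg} gives
\[
I^Q_{\pi(s), \mu_s} = (\nabla \mu_s)(2 A(s)\mu_s + B(s)) = (\nabla \mu_s)\bigl(H(\mu_s,s)\mu_s + \nabla_a Q(\mu_s,s) - H(\mu_s,s)\mu_s\bigr),
\]
so the Hessian terms cancel, leaving $I^Q_{\pi(s), \mu_s} \approx (\nabla \mu_s)\nabla_a Q(a = \mu_s, s)$. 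For the covariance component, substituting $A(s) = \tfrac12 H(\mu_s,s)$ into $I^Q_{\pi(s), \Sigmasqr_s} = (\nabla \Sigmasqr_s) 2 A(s) \Sigmasqr_s$ immediately yields $I^Q_{\pi(s), \Sigmasqr_s} \approx (\nabla \Sigmasqr_s) H(\mu_s,s) \Sigmasqr_s$, as claimed.

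The only non-exact step, and hence the source of the $\approx$ signs, is the Taylor truncation: Lemma \ref{lem-gpg} is exact for a quadric critic, so the error in the corollary is precisely the error incurred by replacing $Q$ with its second-order expansion inside the integral $I^Q_\pi(s) = \int_a d\pi(a \mid s)\,\nabla\log\pi(a\mid s)\,Q(a,s)$. The main thing to be careful about is therefore not the algebra (which is routine once the coefficients are identified) but arguing that this truncation is benign. I would justify it by the reasoning already given in the text: the Gaussian concentrates its mass near $\mu_s$, so only the local behaviour of $Q$ matters, and the cubic-and-higher remainder integrates against $\pi$ to a term controlled by the policy covariance, which is small in the regime where policy gradients operate. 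A fully rigorous version would require bounding this remainder in terms of $\Sigma_s$ and a modulus of continuity of the third derivatives of $Q$, but for the corollary the concentration argument suffices.
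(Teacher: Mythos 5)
Your proposal is correct and follows essentially the same route as the paper's proof: Taylor-expand $Q$ to second order about $\mu_s$, identify $A(s) = \tfrac12 H(\mu_s,s)$ and $B(s) = \nabla_a Q(a=\mu_s,s) - H(\mu_s,s)\mu_s$, and apply Lemma \ref{lem-gpg}, with the Hessian terms cancelling in the mean component. Your added remarks on controlling the truncation error go slightly beyond what the paper states but are consistent with its informal justification.
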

\begin{proof}
We begin by approximating the critic (for a given $s$) using the first two terms of the Taylor expansion of $Q$ in $\mu_s$.
\begin{align*}
    \textstyle Q(a, s) &  \textstyle \approx Q(\mu_s, s) + (a - \mu_s)^\top \nabla_a Q (a = \mu_s, s) \\
    & \textstyle \quad \quad \quad \quad \quad \quad \quad  + \frac{1}{2} (a - \mu_s)^\top H(\mu_s, s) (a - \mu_s) \\
    & \scriptstyle = \frac{1}{2} a^\top H(\mu_s, s) a + a^\top \left( \nabla_a Q (a = \mu_s, s) - H(\mu_s, s) \mu_s \right)  + \text{const}.
\end{align*}
Because of the series truncation, the function on the righthand side is quadric and we can then use Lemma \ref{lem-gpg}:
\begin{align*}
    \scriptstyle I^Q_{\pi(s), \mu_s} &\scriptstyle = \nabla \mu_s (2 \frac{1}{2}  H(\mu_s, s) \mu_s + \nabla_a Q (a = \mu_s, s) - H(\mu_s, s) \mu_s) \\
    &\scriptstyle = \nabla \mu_s \nabla_a Q (a = \mu_s, s) \\
    \scriptstyle I^Q_{\pi(s), \Sigmasqr_s} &\scriptstyle = (\nabla {\Sigmasqr_s}) (\frac{1}{2} 2 H(\mu_s, s) \Sigmasqr_s) = (\nabla {\Sigmasqr_s}) H(\mu_s, s) \Sigmasqr_s.
\end{align*}
\end{proof}
To actually obtain the Hessian, we could use automatic differentiation to compute it analytically. Alternatively, we can observe that, if the critic really is quadric, we can just read off the coefficients of the quadric term directly. Therefore, we can approximate the Hessian by generating a number of random action-values around $\mu_s$, computing the $Q$ values, and (locally) fitting a quadric. This process is typically more computationally expensive than automatic differentiation but has the advantage of working with ReLU networks (where the true Hessian is zero but we still have a kind of global curvature after smoothing) and leveraging more information from the critic (since the evaluation is at more than one point).

\subsection{Linear GPG}
We now state a consequence of Lemma \ref{lem-gpg} for the case when the critic $Q$ is linear in the actions, i.e., the quadric term is always zero.
\begin{corollary}[Linear Gaussian Policy Gradients]
    \label{linear-gpg}
    If the policy is Gaussian, i.e., $\pi(\cdot \vert s) \sim \mathcal{N}(\mu_s,\Sigma_s)$ with $\mu_s$ parametrised by $\theta$ and the critic is of the form $Q(a \mid s) = a^{\top} B(s) +  \text{const}$, then $I^Q_\pi(s) = (\nabla \mu_s) B(s)$. Moreover, it is unnecessary to parameterise $\Sigmasqr_s$ since the policy gradient w.r.t.\ to $\Sigmasqr_s$ is zero (i.e., a linear $Q$-function does not give any information about the exploration covariance).
\end{corollary}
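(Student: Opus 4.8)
The plan is to derive this as a direct specialisation of Lemma~\ref{lem-gpg}. In the notation of that lemma, a quadric critic has the form $Q(a,s) = a^\top A(s) a + a^\top B(s) + \text{const}$. The linearity assumption here means precisely that the quadric coefficient vanishes, i.e.\ $A(s) = 0$ for every $s$, leaving $Q(a \mid s) = a^\top B(s) + \text{const}$. Since a zero matrix is trivially symmetric, the hypotheses of Lemma~\ref{lem-gpg} are met, so I may simply substitute $A(s) = 0$ into the two formulas it provides.

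First I would treat the mean component. Lemma~\ref{lem-gpg} gives $I^Q_{\pi(s), \mu_s} = (\nabla \mu_s)(2 A(s) \mu + B(s))$; setting $A(s) = 0$ collapses the term $2A(s)\mu$ to zero and yields $I^Q_{\pi(s), \mu_s} = (\nabla \mu_s) B(s)$. Next I would treat the covariance component: Lemma~\ref{lem-gpg} gives $I^Q_{\pi(s), \Sigmasqr_s} = (\nabla \Sigmasqr_s)\, 2 A(s) \Sigmasqr_s$, which is identically zero once $A(s) = 0$. Adding the two components as prescribed by the lemma, $I^Q_\pi(s) = I^Q_{\pi(s), \mu_s} + I^Q_{\pi(s), \Sigmasqr_s} = (\nabla \mu_s) B(s) + 0 = (\nabla \mu_s) B(s)$, which is the claimed formula.

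Finally I would justify the second sentence of the statement. Because the gradient contribution through the covariance parameters, $I^Q_{\pi(s), \Sigmasqr_s}$, is exactly zero, the objective $J$ has zero derivative with respect to $\Sigmasqr_s$. Hence any attempt to learn $\Sigmasqr_s$ by this gradient receives no signal, so there is no point parameterising $\Sigmasqr_s$ at all; equivalently, a linear $Q$-function carries no curvature information, and since the exploration covariance in GPG is governed by the Hessian of the critic (see Lemma~\ref{lem-hexplore}), a linear critic cannot inform it.

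I do not anticipate any real obstacle, since the result is a substitution into an already-established lemma. The only point requiring a sentence of care is the interpretive claim that zero gradient justifies not parameterising $\Sigmasqr_s$ — this is a modelling observation rather than a computation, so I would state it plainly rather than belabour it, and confirm that the symmetry requirement on $A(s)$ from Lemma~\ref{lem-gpg} is vacuously satisfied by the zero matrix.
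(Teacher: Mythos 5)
Your proposal is correct and follows exactly the route the paper intends: the corollary is presented as the specialisation of Lemma~\ref{lem-gpg} to $A(s)=0$, which immediately gives $I^Q_{\pi(s),\mu_s}=(\nabla\mu_s)B(s)$ and $I^Q_{\pi(s),\Sigmasqr_s}=0$. The paper supplies no further argument, so your substitution plus the remark that a zero covariance gradient makes parameterising $\Sigmasqr_s$ pointless covers everything required.
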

We make Corollary \ref{linear-gpg} explicit for two reasons. First, it is useful for showing an equivalence between DPG and EPG (see below). Second, it may actually be  useful for a non-trivial class of physical systems: if the time-sampling frequency is high enough (which implies acting in small steps), the critic is effectively only used to say if a small step one way is preferable to small step the other way -- a linear property.

\subsection{Equivalences between EPG and DPG}
The update for the policy mean obtained in Corollary \ref{lem-agpg} is  the same as the DPG update, linking the two methods:
\[
I^Q_\pi(s) = (\nabla \mu_s) \nabla_a Q(a = \mu_s,s).
\]

We now formalise the equivalences between EPG and DPG. First, on-policy GPG with a linear critic (or an arbitrary critic approximated by the first term in the Taylor expansion) is equivalent to DPG with a Gaussian exploration policy where the covariance stays the same. This follows from Corollary \ref{linear-gpg}. Second, on-policy GPG with a quadric critic (or an arbitrary critic approximated by the first two terms in the Taylor expansion) is equivalent to DPG with a Gaussian exploration policy where the covariance is computed using the update (where $\alpha_n$ is a sequence of step-sizes):
\begin{gather}
    \label{eq-cov-upd}
    \Sigmasqr_s \leftarrow \Sigmasqr_s + \alpha_n H(s) \Sigmasqr_s.
\end{gather}
This follows from Corollary \ref{lem-agpg}. Third, and most generally, for any critic at all (not necessarily quadric), DPG is a kind of EPG for a particular choice of quadrature (using a Dirac measure). This follows from Theorem \ref{th-gpgt}.

Surprisingly, this means that DPG, normally considered to be off-policy, can also be seen as on-policy when exploring with Gaussian noise. Furthermore, the compatible critic for DPG \citep{silver2014deterministic} is indeed linear in the actions. Hence, this relationship holds whenever DPG uses a compatible critic.\footnote{The notion of compatibility of a critic is different for stochastic and deterministic policy gradients.} Furthermore, Lemma \ref{lem-gpg} lends new legitimacy to the common practice of replacing the critic required by the DPG theory, which approximates $\nabla_a Q$, with one that approximates $Q$ itself, as done in SPG and EPG.

\subsection{Exploration using the Hessian}
The second equivalence given above suggests that we can include the covariance in the actor network and learn it along with the mean. However, another option is to compute it from scratch at each iteration by analytically computing the result of applying \eqref{eq-cov-upd} infinitely many times.

\begin{lemma}[Exploration Limit]
\label{lem-hexplore}
The iterative procedure defined by the equation $\Sigmasqr_s \leftarrow \Sigmasqr_s + \alpha H(s) \Sigmasqr_s$ applied $n$ times using the diminishing learning rate $\alpha = 1/n$ converges to $\Sigmasqr_s \propto e^{H(s)} $ as $n \rightarrow \infty$.
\end{lemma}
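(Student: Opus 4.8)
The plan is to recognise the update rule as a fixed linear recurrence and then reduce the claim to the classical matrix-exponential limit. Each application of the update left-multiplies the current iterate by $(I + \alpha H(s))$, so starting from an initial value $\Sigmasqr_{s,0}$ and applying the update $n$ times with $\alpha = 1/n$ produces
\[
\Sigmasqr_s = \left( I + \tfrac{1}{n} H(s) \right)^{\!n} \Sigmasqr_{s,0}.
\]
The entire content of the lemma therefore collapses to showing that $\bigl( I + \tfrac{1}{n} H(s) \bigr)^{n} \to e^{H(s)}$ as $n \to \infty$, after which the proportionality statement follows immediately.

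First I would exploit the fact that $H(s)$ is the Hessian of $Q$ with respect to the actions and is hence symmetric for each fixed $s$. Symmetry guarantees an orthogonal diagonalisation $H(s) = U \Lambda U^\top$ with $U U^\top = U^\top U = I$ and $\Lambda = \operatorname{diag}(\lambda_1, \dots, \lambda_d)$. Substituting this and repeatedly cancelling $U^\top U = I$ inside the product lets me move the power past the conjugation, giving
\[
\left( I + \tfrac{1}{n} H(s) \right)^{\!n} = U \left( I + \tfrac{1}{n}\Lambda \right)^{\!n} U^\top,
\]
so that the matrix power acts diagonally in the eigenbasis and the limit can be taken entry by entry.

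Second I would reduce the matrix limit to $d$ independent scalar limits. The diagonal entries are $(1 + \lambda_i/n)^n$, and the elementary identity $\lim_{n\to\infty}(1 + \lambda_i/n)^n = e^{\lambda_i}$ gives $\bigl( I + \tfrac{1}{n}\Lambda \bigr)^{n} \to \operatorname{diag}(e^{\lambda_1}, \dots, e^{\lambda_d})$. Reassembling, $U \operatorname{diag}(e^{\lambda_i}) U^\top = e^{H(s)}$ by the definition of the matrix exponential of a symmetric matrix, so $\Sigmasqr_s \to e^{H(s)} \Sigmasqr_{s,0}$. Taking the initial iterate to be a scalar multiple of the identity (as in Algorithm \ref{alg-p-quad}) then yields $\Sigmasqr_s \propto e^{H(s)}$, as required.

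I do not expect a serious obstacle. The only point demanding care is the passage from the scalar limit to the matrix limit, and the symmetry of $H(s)$ makes this routine through the orthogonal diagonalisation above. If one wished to drop even the diagonalisability assumption, the limit $\bigl(I + A/n\bigr)^{n} \to e^{A}$ holds for an arbitrary square matrix $A$ via the exponential series together with a norm-based remainder bound; but since $H(s)$ is always symmetric, this extra generality is unnecessary and I would not pursue it.
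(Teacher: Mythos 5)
Your proposal is correct and follows essentially the same route as the paper's own proof: unroll the recursion to $\bigl(I + \tfrac{1}{n}H(s)\bigr)^{n}(\Sigma^{1/2}_s)_0$, orthogonally diagonalise the symmetric Hessian, apply the scalar limit $(1+\lambda/n)^n \to e^{\lambda}$ entrywise, and reassemble, with the initial iterate $\sigma_0 I$ giving the proportionality. The only difference is cosmetic: you make explicit the symmetry justification for diagonalisability and the role of the identity initialisation, both of which the paper leaves implicit.
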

\begin{proof}
Consider the sequence $(\Sigmasqr_s)_0 = \sigma_0 I$, $(\Sigmasqr_s)_n = (\Sigmasqr_s)_{n-1} + \alpha H(s) (\Sigmasqr_s)_{n-1}$. Expanding out the recursion, the $n$-th element of the sequence is given as:
\[
    (\Sigmasqr_s)_n = (I + \alpha H(s))^n (\Sigmasqr_s)_0.
\]
We diagonalise the Hessian as $H(s) = U \Lambda U^\top$ for some orthonormal matrix $U$ and obtain the following expression for $(\Sigmasqr_s)_n$.
\[
    (\Sigmasqr_s)_n = (I + \alpha U \Lambda U^\top)^n (\Sigmasqr_s)_0 =
    U (I + \alpha \Lambda )^n U^\top (\Sigmasqr_s)_0
\]
Since we have $\lim_{n \rightarrow \infty} (1 + \frac1n \lambda)^n = e^\lambda$ for each diagonal entry of $\Lambda$, we plug $\alpha = \frac 1n$ and obtain the identity:
\[
    \lim_{n \rightarrow \infty} (\Sigmasqr_s)_n =  U e^\Lambda U^\top (\Sigmasqr_s)_0 = \sigma_0 e^{H(s)}.
\]
\end{proof}
The practical implication of Lemma \ref{lem-hexplore} is that, in a policy gradient method, it is justified to use Gaussian exploration with covariance proportional to $e^{cH}$ for some reward scaling constant $c$. Thus by exploring with (scaled) covariance $e^{cH}$, we obtain a principled alternative to the Ornstein-Uhlenbeck heuristic defined in \eqref{ou-noise}. Our results below show that it also performs much better in practice.

Lemma \ref{lem-hexplore} has an intuitive interpretation. If $H(s)$ has a large positive eigenvalue $\lambda$, then $\hat{Q}(s,\cdot)$ has a sharp minimum along the corresponding eigenvector, and the corresponding eigenvalue of $\Sigma$ is $e^\lambda$, i.e., also large. The result is a large exploration bonus along that direction, enabling the algorithm to leave local minima. Conversely, if $\lambda$ is negative, then $\hat{Q}(s,\cdot)$ has a maximum and so $e^\lambda$ is small, since exploration is not needed.

\subsection{Variance Analysis}
We now prove that for any policy, the EPG estimator of \eqref{spg-samples-i} has lower variance than the SPG estimator of \eqref{spg-samples}.

\begin{lemma}
If for all $s \in S$, the random variable $\nabla \log \pi(a \mid s) \hat{Q}(s, a)$ where $a \sim \pi(\cdot \vert s)$ has nonzero variance, then
\begin{gather*} \scriptstyle
\scriptstyle \vars{\tau}{\scriptstyle \sum_{t=0}^{\infty} \gamma^t \nabla \log \pi(a_t \mid s_t) (\hat{Q}(s_t, a_t) + b(s_t))} \scriptstyle >
 \scriptstyle \vars{\tau}{  \sum_{t=0}^{\infty} \gamma^t I^{\hat{Q}}_\pi(s_t)  } .
\end{gather*}
\label{lem-var}
\end{lemma}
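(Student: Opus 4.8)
The plan is to recognise the EPG per-step term as the conditional expectation of the SPG per-step term and then invoke the law of total variance (Rao--Blackwellisation). Write the SPG and EPG increments as $g_t^{\mathrm{SPG}} = \gamma^t \nabla\log\pi(a_t\mid s_t)(\hat Q(s_t,a_t)+b(s_t))$ and $g_t^{\mathrm{EPG}} = \gamma^t I^{\hat Q}_\pi(s_t)$, and let $\mathcal{G}_t = \sigma(s_0,a_0,\dots,a_{t-1},s_t)$ be the filtration that reveals the trajectory up to and including $s_t$ but not $a_t$. By the definition of $I^{\hat Q}_\pi$ and the fact that $a_t\sim\pi(\cdot\mid s_t)$ given $\mathcal{G}_t$, we have the key identity $\exc{g_t^{\mathrm{SPG}}}{\mathcal{G}_t} = g_t^{\mathrm{EPG}}$; in particular the two estimators share the same mean.

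First I would establish the per-step inequality. The law of total variance gives $\vars{\tau}{g_t^{\mathrm{SPG}}} = \vars{\tau}{g_t^{\mathrm{EPG}}} + \ex{\vars{}{g_t^{\mathrm{SPG}}\mid\mathcal{G}_t}}$, where the final term is nonnegative and, since $\nabla\log\pi(a\mid s)\hat Q(s,a)$ has nonzero variance at every state (and the baseline, being a function of $s$ alone, cannot make the conditional action-variance vanish on a set of positive $\rho$-measure), is strictly positive once integrated against $\rho$. Hence each SPG increment is strictly more variable than its EPG counterpart.

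Next I would assemble the trajectory variance. Expanding both sides into diagonal variances and off-diagonal covariances and using that the means agree, the difference collapses to
\[
\vars{\tau}{\textstyle\sum_t g_t^{\mathrm{SPG}}} - \vars{\tau}{\textstyle\sum_t g_t^{\mathrm{EPG}}} = \sum_t \ex{\vars{}{g_t^{\mathrm{SPG}}\mid\mathcal{G}_t}} + 2\sum_{t<t'} \mathrm{Cov}\!\left(e_t,\, g_{t'}^{\mathrm{EPG}}\right),
\]
where $e_t = g_t^{\mathrm{SPG}} - g_t^{\mathrm{EPG}}$ satisfies $\exc{e_t}{\mathcal{G}_t}=0$; the cross terms with $t'\le t$ drop out because $g_{t'}^{\mathrm{EPG}}$ is then $\mathcal{G}_t$-measurable. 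The first sum is strictly positive by the previous step, so it suffices to control the cross terms.

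The main obstacle is exactly these cross-covariances. Although $\exc{e_t}{\mathcal{G}_t}=0$ kills the correlation of $e_t$ with any $\mathcal{G}_t$-measurable quantity --- hence with every present and past EPG term --- the future terms $g_{t'}^{\mathrm{EPG}}$ with $t'>t$ depend on $s_{t'}$, which in turn depends on the sampled action $a_t$ through the transition kernel, so $\mathrm{Cov}(e_t,g_{t'}^{\mathrm{EPG}})$ need not vanish along a genuinely correlated trajectory. I would resolve this in one of two ways. The clean route is to make the comparison per visited state, treating the states as drawn from the occupancy measure $\rho$ with the action noise at each step independent of the other steps' gradient contributions; then every cross term is zero and the claim follows immediately from the strictly positive diagonal. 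Retaining the full trajectory correlations would instead demand a quantitative estimate --- for instance a contraction or mixing bound showing the accumulated off-diagonal terms are dominated by the diagonal --- and this is where essentially all the difficulty lies. In either case strictness originates solely from the nonzero-variance hypothesis controlling the diagonal terms.
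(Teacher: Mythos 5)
Your approach is essentially the paper's, just packaged differently. The paper formalises what you call the ``clean route'' via a \emph{Second Moment Bellman Equation}: it treats both discounted sums as total rewards of Markov reward processes over the marginalised kernel $p_\pi(s'\mid s)$, and shows that the second moment of such a sum is the value function of an auxiliary MRP with discount $\gamma^2$ and per-state reward $u(s) = \mathbb{V}[x\mid s] + (\mathbb{E}[x\mid s])^2 + 2\gamma\,\mathbb{E}[x\mid s]\,\mathbb{E}[V(s')\mid s]$. Since the last two terms depend only on the conditional mean $\mathbb{E}[x\mid s] = I^{\hat Q}_\pi(s)$, which the SPG and EPG increments share, the difference of auxiliary rewards collapses to the per-state conditional variance --- exactly your law-of-total-variance step --- and a monotonicity observation for value functions (with strictness from the nonzero-variance hypothesis) finishes the argument. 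The cross-covariance obstacle you flag is genuine, and the paper resolves it precisely as your first option does: in the MRP abstraction the gradient noise at step $t$ is drawn from $X(\cdot\mid s_t)$ independently of the transition to $s_{t+1}$ (the action is marginalised out of the kernel), so $\mathrm{Cov}(e_t, g_{t'}^{\mathrm{EPG}})=0$ for $t'>t$ by construction; no mixing or contraction bound is attempted for the fully coupled trajectory, where the same sampled action drives both the gradient term and the next state. Your proposal is therefore correct in substance and matches the paper's proof, and is in fact more explicit than the paper about where the decoupling assumption enters.
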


The proof is deferred to the supplement (see Lemma \ref{lem-var} there). Lemma \ref{lem-var}'s assumption is reasonable since the only way a random variable $\nabla \log \pi(a \mid s) \hat{Q}(s, a)$ could have zero variance is if it were the same for all actions in the policy's support (except for sets of measure zero), in which case optimising the policy would be unnecessary. Since we know that both the estimators of  \eqref{spg-samples} and \eqref{spg-samples-i} are unbiased, the estimator with lower variance has lower MSE.

\subsection{Extension to Entropy Regularisation}
On-policy SPG sometimes includes an entropy term in the gradient in order to aid exploration by making the policy more stochastic. The gradient of the differential entropy\footnote{For discrete action spaces, the same derivation with integrals replaced by sums holds for the entropy.} $\mathcal{H}(s)$ of the policy at state $s$ is defined as follows.
\begin{align*}
    \scriptstyle  - & \scriptstyle \nabla \mathcal{H}(s) = \scriptstyle \nabla \int_a d \pi(a \vert s) \log \pi (a \vert s) \\ & \scriptstyle = \scriptstyle \int_a  da \nabla \pi(a \vert s) \log \pi (a \vert s) +  \int_a  d \pi(a \vert s) \nabla \log \pi (a \vert s) \\
 & \scriptstyle = \scriptstyle \int_a da \nabla \pi(a \vert s) \log \pi (a \vert s) +  \int_a  d \pi(a \vert s) \frac{1}{ \pi (a \vert s)} \nabla \pi(a \vert s) \\ & \scriptstyle = \scriptstyle \int_a da \nabla \pi(a \vert s) \log \pi (a \vert s) +  \nabla \scriptstyle \underbrace{\scriptstyle \int_a  d \pi(a \vert s)}_{1} \\
 & \scriptstyle =  \scriptstyle \int_a  da \nabla \pi(a \vert s) \log \pi (a \vert s) = \scriptstyle  \int_a  d \pi(a \vert s) \nabla \log \pi(a \vert s) \log \pi (a \vert s).
\end{align*}
Typically, we weight the entropy update with the policy gradient update:
\begin{align*}
I_G^E(s) \textstyle & = I_G(s) + \alpha \nabla \mathcal{H}(s) \\
&= \textstyle \int_a  d \pi(a \vert s) \nabla \log \pi(a \vert s) (Q(a,s) - \alpha \log \pi (a \vert s)).
\end{align*}
This equation makes clear that performing entropy regularisation is equivalent to using a different critic with $Q$-values shifted by $\alpha \log \pi (a \vert s)$; this holds for both SPG and EPG.

\begin{table}[th]
    \centering
        \begin{tabular}{ l | p{2.0cm} p{2.0cm} }
        Domain & $\hat{\sigma}_\text{DPG}$ & $\hat{\sigma}_\text{EPG}$ \\
        \hline
        HalfCheetah-v1 & 1336.39 \newline \tiny [1107.85, 1614.51] & 1056.15 \newline \tiny [875.54, 1275.94] \\
        InvertedPendulum-v1 & 291.26 \newline \tiny [241.45, 351.88] & 0.00 \newline \tiny n/a \\
        Reacher2d-v1 & 1.22 \newline \tiny [0.63, 2.31] & 0.13 \newline \tiny [0.07, 0.26] \\
        Walker2d-1 & 543.54 \newline \tiny [450.58, 656.65] & 762.35 \newline \tiny [631.98, 921.00] \\
        \end{tabular}
%\vspace{-2mm}
    \caption{Estimated standard deviation (mean and 90\% interval) across runs after learning.}
    \label{tab-std}
%\vspace{-5mm}
\end{table}

\section{Experiments}
 While EPG has many potential uses, we focus on empirically evaluating one particular application: exploration driven by the Hessian exponential (as introduced in Algorithm \ref{alg-epg-fixpoint} and Lemma \ref{lem-hexplore}), replacing the standard Ornstein-Uhlenbeck (OU) exploration in continuous action domains. To this end, we applied EPG to four domains modelled with the MuJoCo physics simulator \citep{todorov2012mujoco}: HalfCheetah-v1, InvertedPendulum-v1, Reacher2d-v1 and Walker2d-v1 and compared its performance to DPG and SPG.

In practice, EPG differed from deep DPG \citep{lillicrap2015continuous, silver2014deterministic} only in the exploration strategy, though their theoretical underpinnings are different. The hyperparameters for DPG and those of EPG that are not related to exploration were taken from an existing benchmark \citep{islam2017reproducibility, brockman2016openai}. The exploration hyperparameters for EPG were $\sigma_0^2 = 0.2$ and $c = 1.0$ where the exploration covariance is $\sigma_0^2 e^{cH}$. These values were obtained using a grid search from the set $\{0.2,0.5,1\}$ for $\sigma_0^2$ and $\{ 0.5, 1.0, 2.0 \}$ for $c$ over the HalfCheetah-v1 domain. Since $c$ is just a constant scaling the rewards, it is reasonable to set it to $1.0$ whenever reward scaling is already used. Hence, our exploration strategy has just one hyperparameter $\sigma_0^2$ as opposed specifying a pair of parameters (standard deviation and mean reversion constant) for OU. We used the same learning parameters for the other domains. For SPG\footnote{We tried learning the covariance for SPG but the covariance estimate was unstable; no regularisation hyperparameters we tested matched SPG's performance with OU even on the simplest domain.}, we used OU exploration and a constant diagonal covariance of $0.2$ in the actor update (this approximately corresponds to the average variance of the OU process over time). The other parameters for SPG are the same as for the rest of the algorithm. For the learning curves, we obtained 90\% confidence intervals around the learning curves. The learning curves show results of independent evaluation runs which used actions generated by the policy mean without any exploration noise.

The results (Figure \ref{fig-res}) show that EPG's exploration strategy yields much better performance than DPG with OU. Furthermore, SPG does poorly, solving only the easiest domain (InvertedPendulum-v1) reasonably quickly, achieving slow progress on  HalfCheetah-v1, and failing entirely on the other domains. This is not surprising DPG was introduced  precisely to solve the problem of high variance SPG estimates on this type of problem. In InvertedPendulum-v1, SPG initially learns quickly, outperforming the other methods. This is because noisy gradient updates provide a crude, indirect form of exploration that happens to suit this problem. Clearly, this is  inadequate for more complex domains: even for this simple domain it leads to subpar performance late in learning.

\begin{figure}[tb]
\centering
    %\framebox[0.49\textwidth]{\rule{0pt}{5cm}}
    \begin{subfigure}{0.22\textwidth}
        \includegraphics[width=\textwidth]{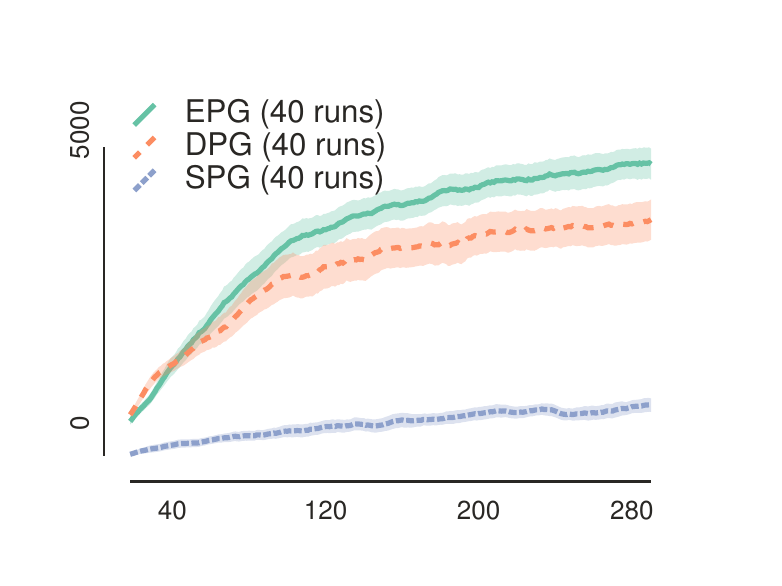}
    \end{subfigure}
    ~
    \begin{subfigure}{0.22\textwidth}
        \includegraphics[width=\textwidth]{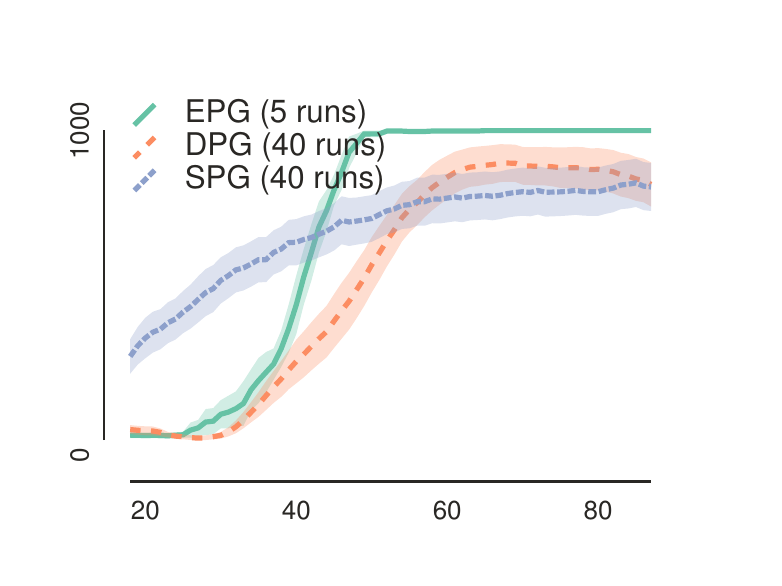}
    \end{subfigure}
    \\
    \begin{subfigure}{0.22\textwidth}
        \includegraphics[width=\textwidth]{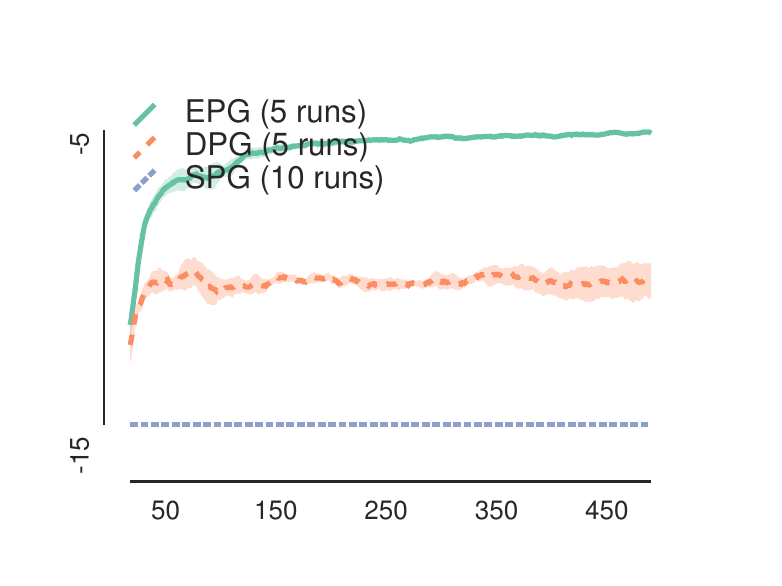}
    \end{subfigure}
    ~
    \begin{subfigure}{0.22\textwidth}
        \includegraphics[width=\textwidth]{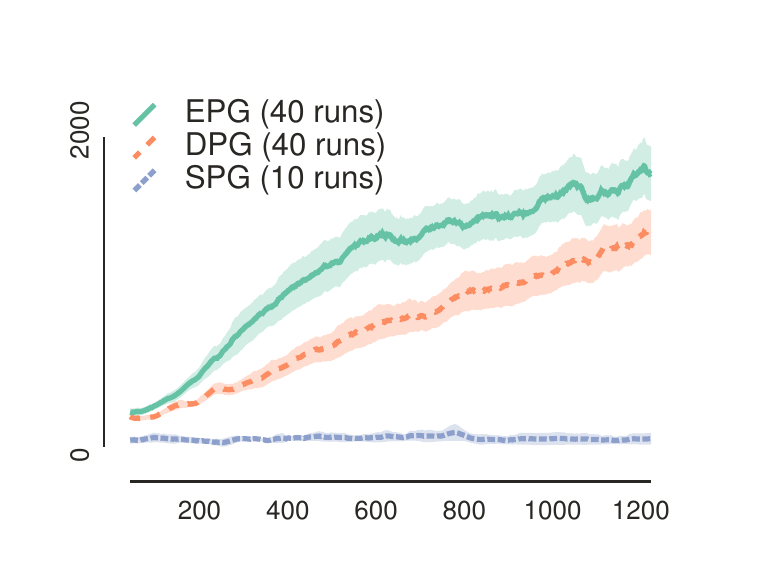}
    \end{subfigure}
%\vspace{-2mm}
    \caption{Learning curves (mean and 90\% interval) for HalfCheetah-v1 (top left), InvertedPendulum-v1 (top right), Reacher2d-v1 (bottom left, clipped at -14) and Walker2d-v1 (bottom right). The number of independent training runs is in parentheses. Horizontal axis is scaled in thousands of steps.}
    \label{fig-res}
%\vspace{-5mm}
\end{figure}

\begin{figure}[h]
    \centering
    \includegraphics[width=0.14\textwidth]{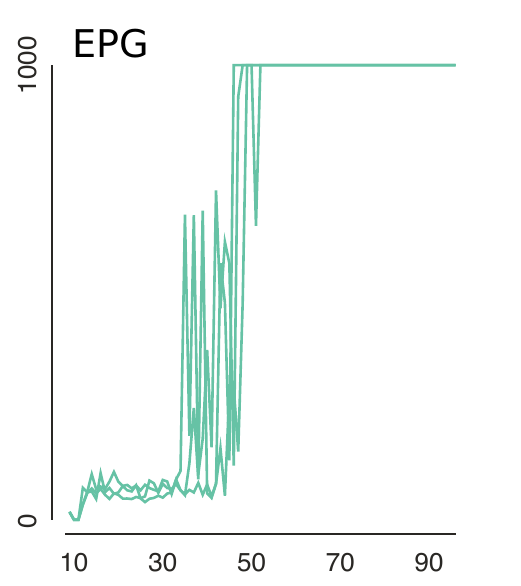}
    \includegraphics[width=0.14\textwidth]{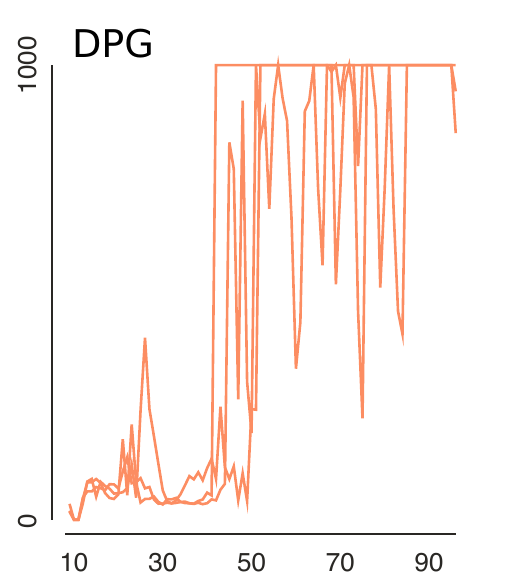}
    \includegraphics[width=0.14\textwidth]{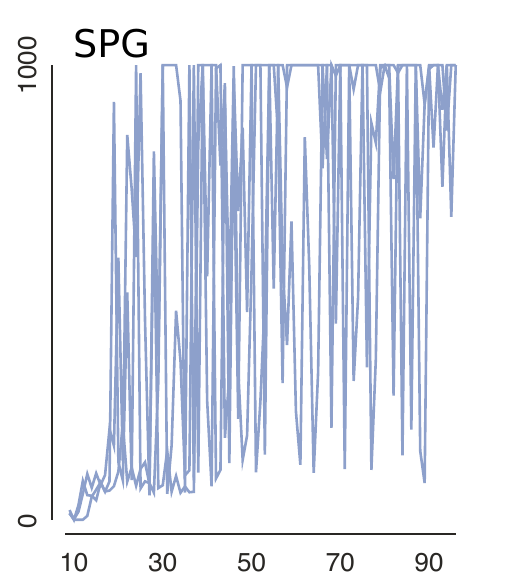}
    \caption{Three runs for EPG (left), DPG (middle) and SPG (right) for the InvertedPendulum-v1 domain, demonstrating that EPG shows much less unlearning.}
    \label{fig-res-unlearn}
%\vspace{-5mm}
\end{figure}

In addition, EPG typically learns more consistently than DPG with OU. In two tasks, the empirical standard deviation across runs of EPG ($\hat{\sigma}_\text{EPG}$) was substantially lower than that of DPG ($\hat{\sigma}_\text{DPG}$) at the end of learning, as shown in Table \ref{tab-std}. For the other two domains, the confidence intervals around the empirical standard deviations for DPG and EPG were too wide to draw conclusions.

Surprisingly, for InvertedPendulum-v1, DPG's learning curve declines late in learning. The reason can be seen in the individual runs shown in Figure \ref{fig-res-unlearn}: both DPG and SPG suffer from severe unlearning. This unlearning cannot be explained by exploration noise since the evaluation runs just use the mean action, without exploring. Instead, OU exploration in DPG may be too coarse, causing the optimiser to exit good optima, while SPG unlearns due to noise in the gradients. The noise also helps speed initial learning, as described above, but this does not transfer to other domains. EPG avoids this problem by automatically reducing the noise when it finds a good optimum, i.e., a Hessian with large negative eigenvalues.

\section{Conclusions}

This paper proposed a new policy gradient method called \emph{expected policy gradients} (EPG), that integrates across the action selected by the stochastic policy. We used EPG to prove a new general policy gradient theorem subsuming the stochastic and deterministic policy gradient theorems. We also showed that, under certain realistic conditions, the quadrature required by EPG can be performed analytically, allowing DPG with principled exploration. We presented empirical results confirming that this application of EPG outperforms DPG and SPG on four domains.

\section{Acknowledgements}
This project has received funding from the European Research Council (ERC) under the European Union's Horizon 2020 research and innovation programme (grant agreement number 637713).
{
\fontsize{9.0pt}{10.5pt} \selectfont
\bibliography{Bibliography-File}

\begin{thebibliography}{}

\bibitem[\protect\citeauthoryear{Amari}{1998}]{amari1998natural}
Amari, S.-I.
\newblock 1998.
\newblock Natural gradient works efficiently in learning.
\newblock {\em Neural computation} 10(2):251--276.

\bibitem[\protect\citeauthoryear{{Asadi} \bgroup et al\mbox.\egroup
  }{2017}]{2017arXiv170900503A}
{Asadi}, K.; {Allen}, C.; {Roderick}, M.; {Mohamed}, A.-r.; {Konidaris}, G.;
  and {Littman}, M.
\newblock 2017.
\newblock {Mean Actor Critic}.
\newblock {\em ArXiv e-prints}.

\bibitem[\protect\citeauthoryear{Baird and others}{1995}]{baird1995residual}
Baird, L., et~al.
\newblock 1995.
\newblock Residual algorithms: Reinforcement learning with function
  approximation.
\newblock In {\em Proceedings of the twelfth international conference on
  machine learning},  30--37.

\bibitem[\protect\citeauthoryear{Bhatnagar \bgroup et al\mbox.\egroup
  }{2008}]{bhatnagar2008incremental}
Bhatnagar, S.; Ghavamzadeh, M.; Lee, M.; and Sutton, R.~S.
\newblock 2008.
\newblock Incremental natural actor-critic algorithms.
\newblock In {\em Advances in neural information processing systems},
  105--112.

\bibitem[\protect\citeauthoryear{Brockman \bgroup et al\mbox.\egroup
  }{2016}]{brockman2016openai}
Brockman, G.; Cheung, V.; Pettersson, L.; Schneider, J.; Schulman, J.; Tang,
  J.; and Zaremba, W.
\newblock 2016.
\newblock Openai gym.
\newblock {\em arXiv preprint arXiv:1606.01540}.

\bibitem[\protect\citeauthoryear{Furmston and
  Barber}{2012}]{furmston2012unifying}
Furmston, T., and Barber, D.
\newblock 2012.
\newblock A unifying perspective of parametric policy search methods for markov
  decision processes.
\newblock In {\em Advances in neural information processing systems},
  2717--2725.

\bibitem[\protect\citeauthoryear{Furmston, Lever, and
  Barber}{2016}]{furmston2016approximate}
Furmston, T.; Lever, G.; and Barber, D.
\newblock 2016.
\newblock Approximate newton methods for policy search in markov decision
  processes.
\newblock {\em Journal of Machine Learning Research} 17(227):1--51.

\bibitem[\protect\citeauthoryear{Gu \bgroup et al\mbox.\egroup
  }{2016a}]{gu2016q}
Gu, S.; Lillicrap, T.; Ghahramani, Z.; Turner, R.~E.; and Levine, S.
\newblock 2016a.
\newblock Q-prop: Sample-efficient policy gradient with an off-policy critic.
\newblock {\em arXiv preprint arXiv:1611.02247}.

\bibitem[\protect\citeauthoryear{Gu \bgroup et al\mbox.\egroup
  }{2016b}]{gu2016continuous}
Gu, S.; Lillicrap, T.; Sutskever, I.; and Levine, S.
\newblock 2016b.
\newblock Continuous deep q-learning with model-based acceleration.
\newblock In {\em International Conference on Machine Learning},  2829--2838.

\bibitem[\protect\citeauthoryear{Heess \bgroup et al\mbox.\egroup
  }{2015}]{heess2015learning}
Heess, N.; Wayne, G.; Silver, D.; Lillicrap, T.; Erez, T.; and Tassa, Y.
\newblock 2015.
\newblock Learning continuous control policies by stochastic value gradients.
\newblock In {\em Advances in Neural Information Processing Systems},
  2944--2952.

\bibitem[\protect\citeauthoryear{Islam \bgroup et al\mbox.\egroup
  }{2017}]{islam2017reproducibility}
Islam, R.; Henderson, P.; Gomrokchi, M.; and Precup, D.
\newblock 2017.
\newblock Reproducibility of benchmarked deep reinforcement learning tasks for
  continuous control.
\newblock {\em arXiv preprint arXiv:1708.04133}.

\bibitem[\protect\citeauthoryear{Kakade}{2002}]{kakade2002natural}
Kakade, S.~M.
\newblock 2002.
\newblock A natural policy gradient.
\newblock In {\em Advances in neural information processing systems},
  1531--1538.

\bibitem[\protect\citeauthoryear{Kingma and Ba}{2014}]{kingma2014adam}
Kingma, D., and Ba, J.
\newblock 2014.
\newblock Adam: A method for stochastic optimization.
\newblock {\em arXiv preprint arXiv:1412.6980}.

\bibitem[\protect\citeauthoryear{Lagoudakis and
  Parr}{2003}]{lagoudakis2003least}
Lagoudakis, M.~G., and Parr, R.
\newblock 2003.
\newblock Least-squares policy iteration.
\newblock {\em Journal of machine learning research} 4(Dec):1107--1149.

\bibitem[\protect\citeauthoryear{Lillicrap \bgroup et al\mbox.\egroup
  }{2015}]{lillicrap2015continuous}
Lillicrap, T.~P.; Hunt, J.~J.; Pritzel, A.; Heess, N.; Erez, T.; Tassa, Y.;
  Silver, D.; and Wierstra, D.
\newblock 2015.
\newblock Continuous control with deep reinforcement learning.
\newblock {\em arXiv preprint arXiv:1509.02971}.

\bibitem[\protect\citeauthoryear{Parisi, Pirotta, and
  Restelli}{2016}]{parisi2016multi}
Parisi, S.; Pirotta, M.; and Restelli, M.
\newblock 2016.
\newblock Multi-objective reinforcement learning through continuous pareto
  manifold approximation.
\newblock {\em Journal of Artificial Intelligence Research} 57:187--227.

\bibitem[\protect\citeauthoryear{Peters and Schaal}{2006}]{peters2006policy}
Peters, J., and Schaal, S.
\newblock 2006.
\newblock Policy gradient methods for robotics.
\newblock In {\em Intelligent Robots and Systems, 2006 IEEE/RSJ International
  Conference on},  2219--2225.
\newblock IEEE.

\bibitem[\protect\citeauthoryear{Peters and Schaal}{2008a}]{peters2008natural}
Peters, J., and Schaal, S.
\newblock 2008a.
\newblock Natural actor-critic.
\newblock {\em Neurocomputing} 71(7):1180--1190.

\bibitem[\protect\citeauthoryear{Peters and
  Schaal}{2008b}]{peters2008reinforcement}
Peters, J., and Schaal, S.
\newblock 2008b.
\newblock Reinforcement learning of motor skills with policy gradients.
\newblock {\em Neural networks} 21(4):682--697.

\bibitem[\protect\citeauthoryear{Pirotta, Restelli, and
  Bascetta}{2013}]{pirotta2013adaptive}
Pirotta, M.; Restelli, M.; and Bascetta, L.
\newblock 2013.
\newblock Adaptive step-size for policy gradient methods.
\newblock In {\em Advances in Neural Information Processing Systems},
  1394--1402.

\bibitem[\protect\citeauthoryear{Rummery and Niranjan}{1994}]{rummery1994line}
Rummery, G.~A., and Niranjan, M.
\newblock 1994.
\newblock {\em On-line Q-learning using connectionist systems}.
\newblock University of Cambridge, Department of Engineering.

\bibitem[\protect\citeauthoryear{Schulman \bgroup et al\mbox.\egroup
  }{2015}]{schulman2015trust}
Schulman, J.; Levine, S.; Abbeel, P.; Jordan, M.; and Moritz, P.
\newblock 2015.
\newblock Trust region policy optimization.
\newblock In {\em Proceedings of the 32nd International Conference on Machine
  Learning (ICML-15)},  1889--1897.

\bibitem[\protect\citeauthoryear{Sehnke \bgroup et al\mbox.\egroup
  }{2010}]{sehnke2010parameter}
Sehnke, F.; Osendorfer, C.; R{\"u}ckstie{\ss}, T.; Graves, A.; Peters, J.; and
  Schmidhuber, J.
\newblock 2010.
\newblock Parameter-exploring policy gradients.
\newblock {\em Neural Networks} 23(4):551--559.

\bibitem[\protect\citeauthoryear{Silver \bgroup et al\mbox.\egroup
  }{2014}]{silver2014deterministic}
Silver, D.; Lever, G.; Heess, N.; Degris, T.; Wierstra, D.; and Riedmiller, M.
\newblock 2014.
\newblock Deterministic policy gradient algorithms.
\newblock In {\em ICML}.

\bibitem[\protect\citeauthoryear{Sutton and
  Barto}{1998}]{sutton1998reinforcement}
Sutton, R.~S., and Barto, A.~G.
\newblock 1998.
\newblock {\em Reinforcement learning: An introduction}, volume~1.
\newblock MIT press Cambridge.

\bibitem[\protect\citeauthoryear{Sutton \bgroup et al\mbox.\egroup
  }{2000}]{sutton2000policy}
Sutton, R.~S.; McAllester, D.~A.; Singh, S.~P.; and Mansour, Y.
\newblock 2000.
\newblock Policy gradient methods for reinforcement learning with function
  approximation.
\newblock In {\em Advances in neural information processing systems},
  1057--1063.

\bibitem[\protect\citeauthoryear{Sutton}{1996}]{sutton1996generalization}
Sutton, R.~S.
\newblock 1996.
\newblock Generalization in reinforcement learning: Successful examples using
  sparse coarse coding.
\newblock {\em Advances in neural information processing systems}  1038--1044.

\bibitem[\protect\citeauthoryear{Todorov, Erez, and
  Tassa}{2012}]{todorov2012mujoco}
Todorov, E.; Erez, T.; and Tassa, Y.
\newblock 2012.
\newblock Mujoco: A physics engine for model-based control.
\newblock In {\em Intelligent Robots and Systems (IROS), 2012 IEEE/RSJ
  International Conference on},  5026--5033.
\newblock IEEE.

\bibitem[\protect\citeauthoryear{Uhlenbeck and
  Ornstein}{1930}]{uhlenbeck1930theory}
Uhlenbeck, G.~E., and Ornstein, L.~S.
\newblock 1930.
\newblock On the theory of the brownian motion.
\newblock {\em Physical review} 36(5):823.

\bibitem[\protect\citeauthoryear{van Seijen \bgroup et al\mbox.\egroup
  }{2009}]{vanseijen:adprl09}
van Seijen, H.; van Hasselt, H.; Whiteson, S.; and Wiering, M.
\newblock 2009.
\newblock A theoretical and empirical analysis of expected sarsa.
\newblock In {\em ADPRL 2009: Proceedings of the IEEE Symposium on Adaptive
  Dynamic Programming and Reinforcement Learning},  177--184.

\end{thebibliography}
\bibliographystyle{aaai}
}

\onecolumn
\newpage

\section{Supplement}
We first provide formal proofs for certain statements invoked by our paper. We then provide a brief discussion of the use of a learning rate that diminished in the trajectory length in the computation of the covariance.

\section{Proofs}
\label{sec-proofs}
First, we prove two lemmas concerning the discounted-ergodic measure $\rho(s)$ which have been implicitly realised for some time but as far as we could find, never proved explicitly.

\begin{definition}[Time-dependent occupancy]
\label{def-tdo}
\begin{gather*}
p(s \mid t=0 ) = p_0(s) \\
p(s' \mid t=i+1 ) = \int_s p(s' \mid s) p(s \mid t=i ) \quad \text{for} \quad i \geq 0
\end{gather*}
\end{definition}

\begin{definition}[Truncated trajectory]
Define the trajectory truncated after $N$ steps as $\tau^N = (s_0,a_0,r_0,s_1,a_1,r_1,\dots,s_N)$.
\end{definition}

\begin{observation}[Expectation wrt. truncated trajectory]
Since $\tau_N = (s_0,s_1,s_2,\dots,s_N)$ is associated with the density $ \prod_{i=0}^{N-1} p(s_{i+1} \mid s_i) p_0(s_0)$, we have that
\begin{align*} \textstyle
& \textstyle \exs{\tau_N} {\sum_{i=0}^{N} \gamma^i f(s_i)} = \\
 & \quad = \textstyle \int_{s_0, s_1, \dots, s_N} \left (\prod_{i=0}^{N-1} p(s_{i+1} \mid s_i) \right) p_0(s_0) \left( \sum_{i=0}^{N} \gamma^i f(s_i) \right) ds_0 ds_1 \dots ds_N= \\
 & \quad= \textstyle \sum_{i=0}^{N} \int_{s_0, s_1, \dots, s_N} \left ( p_0(s_0) \prod_{i=0}^{N-1} p(s_{i+1} \mid s_i) \right) \gamma^i f(s_i) ds_0 ds_1 \dots ds_N = \\
 & \quad= \textstyle \sum_{i=0}^{N} \int_s p(s \mid t=i) \gamma^i f(s) ds
\end{align*}

for any function $f$.

\end{observation}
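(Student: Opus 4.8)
The plan is to verify the three displayed equalities in the statement in turn, since the chain of equations already lays out the computation; the work is in justifying each step. The first equality is simply the definition of expectation against the joint density $p_0(s_0)\prod_{i=0}^{N-1} p(s_{i+1}\mid s_i)$ of the truncated trajectory. The second equality is linearity of the integral, interchanging the finite sum $\sum_{i=0}^{N}$ with the integral over $s_0,\dots,s_N$; this is unproblematic precisely because the sum is finite, so no convergence issue arises.

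The substantive step is the third equality, where for each fixed index $i$ I must show that integrating the full joint density against $\gamma^i f(s_i)$ over all of $s_0,\dots,s_N$ collapses to $\int_s p(s\mid t=i)\gamma^i f(s)\,ds$. I would split the marginalisation into two halves according to whether a state variable lies in the ``future'' or the ``past'' of the fixed index $i$. First, the future variables $s_{i+1},\dots,s_N$ appear only inside the product of transition kernels and never in the integrand $f(s_i)$; integrating them out one at a time from $s_N$ downward, each factor satisfies $\int p(s_{j+1}\mid s_j)\,ds_{j+1}=1$ since the kernel is a normalised measure, so these factors telescope away to $1$. Second, the past variables $s_0,\dots,s_{i-1}$ must be shown to marginalise to $p(s_i\mid t=i)$, i.e. $\int_{s_0,\dots,s_{i-1}} p_0(s_0)\prod_{j=0}^{i-1}p(s_{j+1}\mid s_j)\,ds_0\cdots ds_{i-1}=p(s_i\mid t=i)$.

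This last claim is exactly the Chapman--Kolmogorov identity, and I would prove it by induction on $i$ using Definition~\ref{def-tdo}. The base case $i=0$ is immediate since $p(s\mid t=0)=p_0(s)$ and there are no past variables to integrate. For the inductive step I peel off the outermost integral over $s_{i-1}$: by the induction hypothesis the integral over $s_0,\dots,s_{i-2}$ of $p_0(s_0)\prod_{j=0}^{i-2}p(s_{j+1}\mid s_j)$ equals $p(s_{i-1}\mid t=i-1)$, and then $\int_{s_{i-1}} p(s_i\mid s_{i-1})\,p(s_{i-1}\mid t=i-1)\,ds_{i-1}=p(s_i\mid t=i)$ precisely by the recursive clause of Definition~\ref{def-tdo}. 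Combining the two halves recovers the summand $\int_s p(s\mid t=i)\gamma^i f(s)\,ds$, and summing over $i$ yields the claim.

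The main obstacle, such as it is, is purely bookkeeping: keeping track of which variables are past versus future relative to the fixed index $i$, and confirming that Fubini's theorem licenses reordering the iterated integrals (which holds under the standing measurability and ergodicity assumptions on the induced Markov process). No genuine analytic subtlety enters beyond the normalisation of the transition kernels and the inductive unrolling of the time-dependent occupancy measure.
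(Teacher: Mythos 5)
Your proposal is correct and follows essentially the same route as the paper, which presents this observation as a self-contained chain of equalities and leaves the final marginalisation step implicit. You have simply filled in the details the paper omits: the first two equalities are the definition of expectation and linearity of a finite sum, and the third is the marginalisation of future variables (each transition kernel integrating to one) together with the inductive collapse of the past variables to $p(s\mid t=i)$ via the recursive clause of Definition~\ref{def-tdo}.
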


\begin{definition}[Expectation with respect to infinte trajectory]
For any bounded function $f$, we have
\begin{gather*}
\exs{\tau} {\sum_{i=0}^{\infty} \gamma^i f(s_i)} \triangleq \lim_{N \rightarrow \infty} \exs{\tau_N} {\sum_{i=0}^{N} \gamma^i f(s_i)}.
\end{gather*}
Here, the sum on the left-hand side is part of the symbol being defined.
\end{definition}

\begin{observation}[Property of expectation with respect to infinte trajectory]
\label{obs-eprop}
\begin{align*} \textstyle
\exs{\tau} {\sum_{i=0}^{\infty} \gamma^i f(s_i)}
 &= \textstyle \lim_{N \rightarrow \infty} \exs{\tau_N} {\sum_{i=0}^{N} \gamma^i f(s_i)} = \\
 &= \textstyle \lim_{N \rightarrow \infty} \sum_{i=0}^{N} \int_s p(s \mid t=i) \gamma^i f(s) d s  = \\
 &= \sum_{i=0}^\infty \int_s d p(s \mid t=i) \gamma^i f(s)
\end{align*}
for any bounded function $f$.
\end{observation}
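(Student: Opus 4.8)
The plan is to verify the displayed chain of three equalities one at a time, since each rests on a result already in hand. The first equality is nothing more than the definition of $\exs{\tau}{\sum_{i=0}^\infty \gamma^i f(s_i)}$ introduced immediately above, so it holds by fiat. For the second equality, I would invoke the preceding Observation on expectation with respect to a truncated trajectory, which gives $\exs{\tau_N}{\sum_{i=0}^N \gamma^i f(s_i)} = \sum_{i=0}^N \int_s p(s \mid t=i)\gamma^i f(s)\,ds$ for each fixed $N$; substituting this identity inside the limit over $N$ yields the second equality immediately.

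The only substantive step is the third equality, where I must justify that $\lim_{N\to\infty}\sum_{i=0}^N$ of these integrals equals the infinite series $\sum_{i=0}^\infty$. Here I would use the hypothesis that $f$ is bounded, say $|f(s)| \le M$ for all $s$. Since each $p(\cdot \mid t=i)$ is a probability measure, with total mass $1$ (as follows inductively from Definition~\ref{def-tdo} together with the fact that $p(\cdot \mid s)$ is a transition kernel), each summand obeys $\left| \int_s p(s \mid t=i)\gamma^i f(s)\,ds \right| \le \gamma^i M$. Because $\gamma \in [0,1)$, the majorising series $\sum_{i=0}^\infty \gamma^i M = M/(1-\gamma)$ converges, so $\sum_{i=0}^\infty \int_s p(s \mid t=i)\gamma^i f(s)\,ds$ converges absolutely and its partial sums converge to the infinite sum, which is precisely the claimed third equality.

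The main obstacle, such as it is, lies entirely in this interchange of limit and summation; everything else is bookkeeping. The delicate point to make airtight is that the per-step mass of $p(\cdot \mid t=i)$ is exactly $1$, so that the geometric bound $\gamma^i M$ genuinely dominates each term uniformly in $i$; once that is secured, the absolute convergence of the geometric series closes the argument with no appeal to measure-theoretic machinery beyond completeness of $\mathbb{R}$ applied to an absolutely convergent series (equivalently, dominated convergence against the summable envelope $\gamma^i M$).
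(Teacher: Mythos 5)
Your proposal is correct and follows essentially the same route as the paper, which states this Observation without a separate proof: the displayed chain itself is the argument, with the first equality by definition, the second by the truncated-trajectory Observation, and the third by convergence of the partial sums. Your explicit justification of that last step via the bound $\gamma^i M$ (using that each $p(\cdot \mid t=i)$ is a probability measure and $f$ is bounded) is exactly the detail the paper leaves implicit.
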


\begin{definition}[Discounted-ergodic occupancy measure $\rho$]
\label{def-deom}
\[
\rho(s) = \sum_{i=0}^{\infty} \gamma^i p(s \mid t=i)
\]
\end{definition}
The measure $\rho$ is not normalised in general. Intuitively, it can be thought of as `marginalising out' the time in the system dynamics.

\begin{lemma}[Discounted-ergodic property]
For any bounded function $f$:
\[
\int_s \rho(s) f(s) = \exs{\tau}{\sum_{i=0}^{\infty}  \gamma^i f(s_i)}.
\]
\end{lemma}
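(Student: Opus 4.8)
The plan is to prove the identity by starting from the right-hand side and collapsing it onto the left, reusing the machinery already established earlier in the supplement. First I would invoke Observation \ref{obs-eprop}, which expresses the expectation along the infinite trajectory as a series, $\exs{\tau}{\sum_{i=0}^{\infty} \gamma^i f(s_i)} = \sum_{i=0}^{\infty} \int_s \gamma^i p(s \mid t=i) f(s)\, ds$. This reduces the entire problem to a statement about exchanging the order of an infinite sum and an integral, since the definition of the discounted-ergodic measure (Definition \ref{def-deom}) is exactly $\rho(s) = \sum_{i=0}^{\infty} \gamma^i p(s \mid t=i)$.

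The single substantive step, which I expect to be the main obstacle, is justifying the interchange $\sum_{i=0}^{\infty} \int_s \gamma^i p(s \mid t=i) f(s)\, ds = \int_s \left( \sum_{i=0}^{\infty} \gamma^i p(s \mid t=i) \right) f(s)\, ds$. I would justify this by a Tonelli/dominated-convergence argument: since $f$ is bounded, say $|f(s)| \le M$, and each $p(\cdot \mid t=i)$ is a probability measure with $\int_s p(s \mid t=i)\, ds = 1$, the series of absolute values is controlled by $\sum_{i=0}^{\infty} \gamma^i \int_s p(s \mid t=i) |f(s)|\, ds \le M \sum_{i=0}^{\infty} \gamma^i = \frac{M}{1-\gamma} < \infty$, where finiteness uses $\gamma \in [0,1)$. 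Applying Tonelli to the nonnegative integrand $\gamma^i p(s \mid t=i) |f(s)|$ establishes absolute convergence of the joint sum-integral object, after which Fubini legitimises swapping sum and integral for the signed integrand $f$.

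Once the interchange is granted, the proof closes immediately: the inner sum is precisely $\rho(s)$ by Definition \ref{def-deom}, so the right-hand side equals $\int_s \rho(s) f(s)$, the desired left-hand side. In short, the only real work is the summation-integration exchange; everything else is definitional bookkeeping inherited from Observation \ref{obs-eprop} and the definition of $\rho$. I would note that the boundedness hypothesis on $f$ is exactly what makes the dominating geometric bound available, so it is not merely a convenience but the precise condition under which the manipulation is valid.
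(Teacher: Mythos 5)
Your proof is correct and follows essentially the same route as the paper's: invoke Observation~\ref{obs-eprop} to write the expectation as $\sum_{i=0}^{\infty}\gamma^i \int_s p(s \mid t=i) f(s)\,ds$, then swap the sum and integral to recognise $\rho(s)$ from Definition~\ref{def-deom}. The only difference is that you explicitly justify the interchange via Tonelli/Fubini using boundedness of $f$ and $\gamma < 1$, a step the paper performs silently; this is a welcome addition rather than a deviation.
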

\begin{proof}
\[
\exs{\tau}{\sum_{i=0}^{\infty} \gamma^i f(s_i) } = \sum_{i=0}^{\infty} \gamma^i \int_s p(s \mid t=i) f(s) ds = \int_s \underbrace{ \left[ \sum_{i=0}^{\infty} \gamma^i p(s \mid t=i) \right ]}_{\rho(s)} f(s) ds
\]
Here, the first equality follows from Observation \ref{obs-eprop}.
\end{proof}
This property is useful since the expression on the left can be easily manipulated while the expression on the right can be estimated from samples using Monte Carlo.

\begin{lemma}[Generalised eigenfunction property]
\label{gep-lemma}
For any bounded function $f$:
\[
\gamma \int_s d \rho(s) \int_{s'} d p(s' \mid s) f(s') = \left( \int_s d \rho(s) f(s) \right) - \left ( \int_s d p_0(s) f(s) \right)
\]
\end{lemma}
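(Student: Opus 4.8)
The plan is to unfold the definition of the discounted-ergodic measure $\rho$ and then recognise that one step of the transition kernel advances the time index by one, turning a shift in the summation index into the difference between the full $\rho$-series and its initial term. Concretely, I would start from the left-hand side and substitute Definition \ref{def-deom}, writing
\[
\gamma \int_s d\rho(s) \int_{s'} d p(s' \mid s) f(s') = \gamma \sum_{i=0}^\infty \gamma^i \int_s p(s \mid t=i) \int_{s'} p(s' \mid s) f(s') \, ds' \, ds.
\]

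Next I would interchange the two inner integrals (justified by Fubini/Tonelli, using that $f$ is bounded, the kernels are probability measures, and $\gamma < 1$ makes the whole series absolutely summable) so that the integral over $s$ acts directly on the kernel. By Definition \ref{def-tdo}, $\int_s p(s \mid t=i) p(s' \mid s)\, ds = p(s' \mid t=i+1)$, which collapses the expression to
\[
\gamma \sum_{i=0}^\infty \gamma^i \int_{s'} p(s' \mid t=i+1) f(s')\, ds' = \sum_{i=0}^\infty \gamma^{i+1} \int_{s'} p(s' \mid t=i+1) f(s')\, ds'.
\]
Reindexing with $j=i+1$ turns this into $\sum_{j=1}^\infty \gamma^{j} \int_{s} p(s \mid t=j) f(s)\, ds$, i.e.\ the series defining $\int_s d\rho(s) f(s)$ but with the $j=0$ term removed.

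Finally I would add and subtract the missing $j=0$ term. Since $p(s \mid t=0) = p_0(s)$ by Definition \ref{def-tdo}, that term is exactly $\int_s d p_0(s) f(s)$, so
\[
\sum_{j=1}^\infty \gamma^{j} \int_{s} p(s \mid t=j) f(s)\, ds = \left( \sum_{j=0}^\infty \gamma^{j} \int_{s} p(s \mid t=j) f(s)\, ds \right) - \int_s d p_0(s) f(s) = \int_s d\rho(s) f(s) - \int_s d p_0(s) f(s),
\]
which is the claimed identity. The only delicate point is the exchange of the summation with the integrations and the implicit rearrangement of the series: I expect the main obstacle to be justifying these manipulations rigorously, which I would handle by appealing to the boundedness of $f$, the fact that each $p(\cdot \mid t=i)$ is a probability measure (so the inner integrals are uniformly bounded by $\sup|f|$), and the geometric factor $\gamma^i$ with $\gamma \in [0,1)$, giving an absolutely convergent dominating series to which Fubini and reindexing apply. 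Everything else is bookkeeping with the time-indexed occupancies.
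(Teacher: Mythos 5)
Your proof is correct and follows essentially the same route as the paper's: expand $\rho$ via Definition \ref{def-deom}, absorb one transition step into the time-indexed occupancy using Definition \ref{def-tdo}, shift the summation index, and add back the $t=0$ term as $\int_s d p_0(s) f(s)$. The only difference is that you explicitly justify the interchange of summation and integration (which the paper leaves implicit), so nothing is missing.
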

\begin{proof}
\begin{align*}
    \textstyle
\gamma \int_s d\rho(s) \int_{s'} d p(s' \mid s) f(s') &= \textstyle \gamma \sum_{i=0}^{\infty} \gamma^i \int_{s,s'} p(s \mid t=i) p(s' \mid s) f(s') ds ds'= \\
&= \textstyle \sum_{i=0}^{\infty} \gamma^{i+1} \int_{s'} dp(s' \mid t=i+1) f(s') \\
&= \textstyle \sum_{i=1}^{\infty} \gamma^{i} \int_{s'} dp(s' \mid t=i) f(s') \\
&= \textstyle \left ( \sum_{i=0}^{\infty} \gamma^{i} \int_{s'} dp(s' \mid t=i) f(s') \right) - \left ( \int_s dp_0(s) f(s) \right) \\
&= \textstyle \left( \int_s d\rho(s) f(s) \right) - \left ( \int_s dp_0(s) f(s) \right)
\end{align*}
Here, the first equality follows form definition \ref{def-deom}, the second one from definition \ref{def-tdo}. The last equality follows again from definition \ref{def-deom}.
\end{proof}

\begin{definition}[Markov Reward Process]
A Markov Reward Process is a tuple $(p,p_0,R,\gamma)$, where $p(s' \vert s)$ is a transition kernel, $p_0$ is the distribution over initial states, $R(\cdot \vert s)$ is a reward distribution conditioned on the state and $\gamma$ is the discount constant.
\end{definition}

An MRP can be thought of as an MDP with a fixed policy and dynamics given by marginalising out the actions $p_\pi(s' \mid s) = \int_a d \pi (a \mid s) p(s' \mid a, s)$. Since this paper considers the case of one policy, we abuse notation slightly by using the same symbol $\tau$ to denote trajectories including actions, i.e. $(s_0,a_0,r_0,s_1,a_1,r_1,\dots)$ and without them $(s_0,r_0,s_1,r_1,\dots)$.

\begin{lemma}[Second Moment Bellman Equation]
\label{smbe-lemma}
Consider a Markov Reward Process $(p,p_0,X,\gamma)$ where $p(s' \mid s)$ is a Markov process and $X( \cdot \mid s)$ is some probability density function\footnote{Note that while $X$ occupies a place in the definition of the MRP usually called `reward distribution', we are using the symbol $X$, not $R$ since we shall apply the lemma to $X$es which are constructions distinct from the reward of the MDP we are solving.}.  Denote the value function of the MRP as $V$. Denote the second moment function $S$ as
\[
S(s) = \excs{\tau}{ \left( \sum_{t=0}^{\infty} \gamma^t x_t \right)^2 }{s_0 = s} \quad x_t \sim X(\cdot \mid s_t).
\]
Then $S$ is the value function of the MRP: $(p, p_0, u, \gamma^2)$, where $u(s)$ is a deterministic random variable given by
\[
u(s) = \vars{X( x \mid s)}{x} + \left(\exs{X( x \mid s)}{x}\right)^2 + 2 \gamma \exs{X( x \mid s)}{x} \exs{p( s' \mid s)}{V(s')} .
\]
\end{lemma}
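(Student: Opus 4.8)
The plan is to recognise that the value function of any discounted MRP is the unique solution of its Bellman equation, so it suffices to show that $S$ satisfies the Bellman equation of the MRP $(p, p_0, u, \gamma^2)$, namely
\[
S(s) = u(s) + \gamma^2 \int_{s'} d p(s' \mid s) S(s').
\]
Because $\gamma \in [0,1)$ gives $\gamma^2 < 1$, the associated Bellman operator is a contraction and hence has a unique fixed point equal to the value function; verifying the display above therefore identifies $S$ as the value function of $(p,p_0,u,\gamma^2)$, as claimed.

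To establish the fixed-point equation, I would first peel off the first term of the return. Writing $G = \sum_{t=0}^\infty \gamma^t x_t$ and letting $G' = \sum_{t=1}^\infty \gamma^{t-1} x_t$ denote the return of the trajectory shifted by one step, we have $G = x_0 + \gamma G'$, so that $G^2 = x_0^2 + 2\gamma x_0 G' + \gamma^2 (G')^2$. I would then take $\exc{\cdot}{s_0 = s}$ of both sides and handle the three summands separately. The first gives $\exc{x_0^2}{s_0=s} = \vars{X(x\mid s)}{x} + \left(\exs{X(x\mid s)}{x}\right)^2$, recovering the first two terms of $u(s)$.

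For the cross term, the key structural fact is that, conditioned on $s_0 = s$, the immediate reward $x_0 \sim X(\cdot \mid s)$ is independent of the shifted return $G'$, since $G'$ is a function of $s_1$ and the subsequent (conditionally independent) randomness. Hence $\exc{2\gamma x_0 G'}{s_0=s} = 2\gamma \exs{X(x\mid s)}{x}\,\exc{G'}{s_0=s}$, and by the tower property together with time-homogeneity of the MRP, $\exc{G'}{s_0=s} = \exc{V(s_1)}{s_0=s} = \exs{p(s'\mid s)}{V(s')}$, yielding the third term of $u(s)$. For the final term, the same shift-invariance gives $\exc{(G')^2}{s_1 = s'} = S(s')$, so $\gamma^2 \exc{(G')^2}{s_0=s} = \gamma^2 \int_{s'} d p(s'\mid s) S(s')$. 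Summing the three contributions reproduces exactly $u(s) + \gamma^2 \int_{s'} d p(s'\mid s) S(s')$, completing the verification.

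The main obstacle is not the algebra but justifying the probabilistic manipulations rigorously: one must argue the conditional independence of $x_0$ and $G'$ given $s_0$ from the MRP's Markov structure, invoke time-homogeneity to equate the shifted return's first and second conditional moments with $V$ and $S$, and ensure all the relevant second moments are finite (e.g.\ assuming $X$ has bounded second moments and exploiting $\gamma^2 < 1$) so that interchanging the infinite sum with expectation and conditioning is legitimate.
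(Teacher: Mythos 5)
Your proposal is correct and follows essentially the same route as the paper's proof: decompose the return as $x_0$ plus the discounted tail, expand the square, identify the first two summands with $u(s)$ and the tail's second moment with $\gamma^2\exs{p(s'\mid s)}{S(s')}$, and conclude by uniqueness of the solution to the Bellman equation. Your write-up is somewhat more explicit than the paper's (which simply underbraces the grouped terms) in spelling out the conditional independence of $x_0$ and the shifted return and the tower-property step that turns $\exc{G'}{s_0=s}$ into $\exs{p(s'\mid s)}{V(s')}$, but the argument is the same.
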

\begin{proof}
\begin{align*} \textstyle
S(s) &= \textstyle \excs{\tau}{ \left( x_0 + \sum_{t=1}^{\infty} \gamma^t x_t \right)^2 }{s_0 = s} \\
&= \textstyle \excs{\tau}{ x_0^2 + 2 x_0 \left( \sum_{t=1}^{\infty} \gamma^t x_t \right) +  \left( \sum_{t=1}^{\infty} \gamma^t x_t \right)^2}{s_0 = s} \\
&= \textstyle \underbrace{ \textstyle \excs{\tau}{ x_0^2 }{s_0 = s} + \excs{\tau}{ 2 x_0 \left( \sum_{t=1}^{\infty} \gamma^t x_t \right)}{s_0 = s}}_{u(s)} +  \underbrace{\excs{\tau}{ \textstyle \left( \sum_{t=1}^{\infty} \gamma^t x_t \right)^2}{s_0 = s}}_{\gamma^2  \exs{p( s' \mid s)}{S(s')}}
\end{align*}
This is exactly the Bellman equation of the MRP $(p, p_0, u, \gamma^2)$. The theorem follows since the Bellman equation uniquely determines the value function.
\end{proof}

\begin{observation}[Dominated Value Functions]
\label{dom-o}
Consider two Markov Reward Processes $(p,p_0,X_1,\gamma)$ and $(p,p_0,X_2,\gamma)$, where $p(s' \mid s)$ is a Markov process (common to both MRPs) and $X_1(s)$, $X_2(s)$ are some deterministic random variables meeting the condition $X_1(s) \leq X_2(s)$ for every $s$. Then the value functions $V_1$ and $V_2$ of the respective MRPs satisfy $V_1(s) \leq V_2(s)$ for every $s$. Moreover, if we have that $X_1(s) < X_2(s)$ for all states, then the inequality between value functions is strict.
\end{observation}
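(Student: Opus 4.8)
The plan is to exploit that both MRPs share the same transition kernel $p(s' \mid s)$, so the distribution over state trajectories $\tau = (s_0, s_1, s_2, \dots)$ started from a fixed $s_0 = s$ is identical in the two processes; they differ only in how each visited state is scored. First I would write each value function as an expectation against this common trajectory measure,
\[
V_i(s) = \excs{\tau}{\sum_{t=0}^{\infty} \gamma^t X_i(s_t)}{s_0 = s}, \quad i \in \{1,2\},
\]
using that $X_1$ and $X_2$ are deterministic functions of the current state.

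Next I would form the difference and push it inside the expectation by linearity:
\[
V_2(s) - V_1(s) = \excs{\tau}{\sum_{t=0}^{\infty} \gamma^t \left( X_2(s_t) - X_1(s_t) \right)}{s_0 = s}.
\]
Since $X_2(s_t) - X_1(s_t) \geq 0$ at every state by hypothesis and each $\gamma^t > 0$, the summand is pointwise nonnegative along every trajectory, so its expectation is nonnegative, yielding $V_1(s) \leq V_2(s)$ for all $s$.

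For the strict version, I would note that when $X_1(s') < X_2(s')$ for every state, the $t=0$ term alone equals $X_2(s) - X_1(s) > 0$ for the fixed start state $s$, while every remaining term is still nonnegative; hence the expected discounted sum is strictly positive and $V_1(s) < V_2(s)$.

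The main obstacle is not the inequality itself but justifying the manipulations rigorously: I need the infinite sums to converge and the splitting of the expectation of a difference into a difference of expectations to be valid. This follows from $\gamma \in [0,1)$ together with boundedness of $X_1, X_2$ (which the value functions already presuppose), so dominated convergence licenses interchanging the limit over the truncated trajectories (set up in the earlier truncated-trajectory observations) with the expectation. The coupling step---that the two trajectory measures coincide---requires no genuine construction here, precisely because the kernel $p$ is common and the rewards are state-deterministic.
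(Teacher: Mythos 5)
Your proposal is correct and matches the paper's own argument, which simply expands each value function as the discounted series over the common trajectory distribution and compares the two series termwise (the strict case following because the $t=0$ term is already strictly positive). Your version merely spells out the convergence and interchange-of-limits details that the paper leaves implicit.
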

\begin{proof}
Follows trivially by expanding the value function as a series and comparing series elementwise.
\end{proof}

We now move our attention to prove the Gaussian Policy Gradients lemma.

\begin{customlemma}{1}[Gaussian Policy Gradients]

    \label{lem-gpg-proof}
    If the policy is Gaussian, i.e. $\pi(\cdot \vert s) \sim \mathcal{N}(\mu_s,\Sigma_s)$ with $\mu_s$ and $\Sigmasqr_s$ parametrised by $\theta$, where $\Sigmasqr_s$ is symmetric and $\Sigmasqr_s \Sigmasqr_s = \Sigma_s$ and the critic is of the form $Q(a,s) = a^\top A(s) a + a^\top B(s) + \text{const}$ where $A(s)$ is symmetric for every $s$, then $I^Q_\pi(s) =  I^Q_{\pi(s), \mu_s} + I^Q_{\pi(s), \Sigmasqr_s}$, where the mean and covariance components are given by $ I^Q_{\pi(s), \mu_s} = (\nabla \mu_s) ( 2 A(s) \mu + B(s))$ and $I^Q_{\pi(s), \Sigmasqr_s} = (\nabla \Sigmasqr_s) 2 A(s) \Sigmasqr_s $.
    \end{customlemma}
    \begin{proof}
    
    First, we observe that the critic $Q$ defined in the statement of the lemma does not depend on the policy parameters $\theta$. This is because $Q$ is an approximation to the Q-function maintained by the algorithm as opposed to the true Q-function, which is defined with respect to the policy and does depend on it.

    We can hence move the differentiation outside of the integral, as follows.
    \[
        I^Q_\pi(s) = \nabla \int_a \pi(a \vert s) Q(a,s) da = \nabla \exs{\pi}{Q(a,s)} 
    \]

    We now expand the expectation using a known expression for the expectation of a a quadratic form:
    \[
        \exs{\pi}{Q(a,s)} = \trace (A(s) \Sigma ) + \mu^\top A(s) \mu + B(s)^\top \mu.
    \]

    This gives way to the following derivatives.
    \begin{align*}
        \nabla_{\Sigmasqr} \exs{\pi}{Q(a,s)} &= \nabla_{\Sigmasqr} (\trace (A(s) \Sigma ) + \mu^\top A(s) \mu + B(s)^\top \mu) = 2 A(s) \Sigmasqr \\
        \nabla_\mu \exs{\pi}{Q(a,s)} &= \nabla_\mu (\trace (A(s) \Sigma ) + \mu^\top A(s) \mu + B(s)^\top \mu) = 2 A(s) \mu + B(s)
    \end{align*}.

    We now obtain the result by applying chain rule. 
    \[
        I^Q_\pi(s) = I^Q_{\pi(s), \mu_s} + I^Q_{\pi(s), \Sigmasqr_s} = 
        (\nabla \mu) (2 A(s) \mu + B(s)) + (\nabla \Sigmasqr) (2 A(s) \Sigmasqr)
    \] 

\end{proof}

\begin{customlemma}{3}
    If for all $s \in S$, the random variable $\nabla \log \pi(a \mid s) \hat{Q}(s, a)$ where $a \sim \pi(a \vert s)$ has nonzero variance, then
    \begin{align*} \textstyle
    &\vars{\tau}{\textstyle \sum_{t=0}^{\infty} \gamma^t \nabla \log \pi(a_t \mid s_t) (\hat{Q}(s_t, a_t) + b(s_t)) \;\;\; }  > \nonumber \\
    & \textstyle \vars{\tau}{  \sum_{t=0}^{\infty} \gamma^t I^{\hat{Q}}_\pi(s_t)  } .
    \end{align*}
    \end{customlemma}
\begin{proof}
    Both random variables have the same mean so we need only show that:
    \begin{align*}
    & \textstyle \exs{\tau}{\left (\sum_{t=0}^{\infty} \gamma^t \nabla \log \pi(a_t \mid s_t) (\hat{Q}(s_t, a_t) + b(s_t)) \right)^2} > \\
     & \textstyle \exs{\tau}{  \sum_{t=0}^{\infty} \left ( \gamma^t I^{\hat{Q}}_\pi(s_t) \right)^2  }.
    \end{align*}
    We start by applying Lemma \ref{smbe-lemma} to the lefthand side and setting $X = X_1(s_t) = \gamma^t \nabla \log \pi(a_t \mid s_t) (\hat{Q}(s_t, a_t) + b(s_t))$ where $a_t \sim \pi(a_t \vert s_t)$. This shows that $\exs{\tau}{\left (\sum_{t=0}^{\infty} \gamma^t \nabla \log \pi(a_t \mid s_t) (\hat{Q}(s_t, a_t) + b(s_t)) \right)^2}$ is the total return of the MRP $(p, p_0, u_1, \gamma^2)$, where
    \begin{gather*}
    u_1 =
    \vars{X_1( x \mid s)}{x} +
    \left( \exs{X_1( x \mid s)}{x}\right)^2 +
    2 \gamma \exs{X_1( x \mid s)}{x}  \exs{p( s' \mid s)} {V(s')}.
    \end{gather*}
    Likewise, applying Lemma \ref{smbe-lemma} again to the righthand side, instantiating $X$ as a deterministic random variable $X_2(s_t) = I^{\hat{Q}}_\pi(s_t) $, we have that $\exs{\tau}{  \sum_{t=0}^{\infty} \left ( \gamma^t I^{\hat{Q}}_\pi(s_t) \right)^2  }$ is the total return of the MRP $(p, p_0, u_2, \gamma^2)$, where
    \[
    u_2 = \left(\exs{X_2( x \mid s)}{x}\right)^2 + 2 \gamma \exs{X_2( x \mid s)}{x} \exs{p( s' \mid s)}{V(s')}.
    \]
    Note that $\exs{X_1( x \mid s)}{x} = \exs{X_2( x \mid s)}{x}$ and therefore $u_1 \geq u_2$. Furthermore, by assumption of the lemma, the inequality is strict.  The lemma then follows by applying Observation \ref{dom-o}.
    \end{proof}
For convenience, Lemma \ref{lem-var} also assumes infinite length trajectories.  However, this is not a practical limitation since all policy gradient methods implicitly assume trajectories are long enough to be modelled as infinite.  Furthermore, a finite trajectory variant also holds, though the proof is messier.

\section{Remarks on the covariance limit}
\label{rem-rml}
When we obtain $e^{H}$ as the limiting covariance matrix in Lemma 2 of the main paper, there is a slight modelling difficulty: is it justified to use the learning rate of $\frac{1}{n}$, which diminished in the length of the trajectory, as opposed to a small finite number? We observe that the problem of choosing step sizes is, in general, not specific to our method since all policy gradient methods rely on stochastic optimisation and hence work with a diminishing learning rate of some sort. We do note; however, that the step size we use, which is $\frac1n$ for every point in the trajectory, is different from the step size typically used with Robbins-Monro procedure, which is different at each time step. This means that the sum of our step sizes is finite while the sum of the Robbins-Monro step-sizes diverges. Hence our choice of step size does not give the guarantees typically associated with stochastic optimisation. We use the step sequence since it serves as a useful intermediate stage between simply taking \emph{one} PG step of equation \eqref{eq-cov-upd} and using a finite step-sizes, which would mean that the covariance would converge either to zero or diverge to infinity.

\end{document}